\documentclass{article}

\usepackage{iclr2027_conference,times}
\usepackage{amsmath,amssymb,amsthm}
\usepackage{graphicx}
\usepackage{tikz}
\usetikzlibrary{arrows.meta,positioning,fit,backgrounds,shapes.geometric}
\usepackage{booktabs}
\usepackage{array}
\usepackage{tabularx}
\usepackage{pifont}
\usepackage{algorithm}
\usepackage{algpseudocode}
\usepackage{microtype}
\usepackage{enumitem}
\usepackage[most]{tcolorbox}
\usepackage{placeins}
\usepackage{hyperref}
\usepackage{url}
\usepackage[capitalize,noabbrev]{cleveref}

\newtheorem{theorem}{Theorem}
\newtheorem{lemma}{Lemma}
\newtheorem{proposition}{Proposition}
\newtheorem{corollary}{Corollary}
\crefname{appendix}{Appendix}{Appendices}
\Crefname{appendix}{Appendix}{Appendices}
\crefname{algorithm}{Algorithm}{Algorithms}
\Crefname{algorithm}{Algorithm}{Algorithms}

\tcolorboxenvironment{proof}{
  enhanced,
  breakable,
  colback=black!1,
  colframe=blue!30!gray,
  boxrule=0.45pt,
  arc=1mm,
  left=1mm,
  right=1mm,
  top=0.8mm,
  bottom=0.8mm,
  before skip=4pt,
  after skip=5pt
}
\newtcolorbox{proofbox}[1][Proof]{
  enhanced,
  breakable,
  colback=black!1,
  colframe=blue!30!gray,
  colbacktitle=blue!7,
  coltitle=black,
  boxrule=0.45pt,
  arc=1mm,
  left=1mm,
  right=1mm,
  top=0.8mm,
  bottom=0.8mm,
  before skip=4pt,
  after skip=5pt,
  fonttitle=\bfseries,
  title={#1}
}

\newcommand{\yh}[1]{}
\newcommand{\YH}[1]{}

\newcommand{\R}{\mathbb{R}}
\newcommand{\fl}{\operatorname{fl}}
\newcommand{\norm}[1]{\left\lVert #1 \right\rVert}
\newcommand{\abs}[1]{\left\lvert #1 \right\rvert}
\newcommand{\ExecCert}{ExecCert}
\newcommand{\RRV}{RRV}
\newcommand{\RRVFull}{Retraining-Reference Release Verification}
\newcommand{\OnDemandRRV}{On-demand \RRV}
\newcommand{\IncrementalRRV}{Incremental \RRV}
\newcommand{\cmark}{\ding{51}}
\newcommand{\xmark}{\ding{55}}

\title{From Mathematical to Executable Certificates for Machine Unlearning}
\author{%
  Ziyu Zhao\thanks{Equal contribution.}$^{,1,2}$
  \quad Xinyu Wang\footnotemark[1]$^{,2}$
  \quad Xiao-Wen Chang$^{2}$
  \quad Yixuan He\thanks{Corresponding author.}$^{,1}$\\[2pt]
  \normalfont\normalsize
  $^{1}$Arizona State University
  \quad
  $^{2}$McGill University%
}

\iclrfinalcopy

\begin{document}

\maketitle
\lhead{Preprint}

\begin{abstract}
Machine unlearning is needed when data must be removed because of deletion
requests, outdated records, or data-quality concerns, while retraining from
scratch can be costly.
Certified machine unlearning methods provide mathematical guarantees, while
deployed systems release concrete finite-precision artifacts produced by
software.
To bridge the gap between mathematical guarantees and practical deployment, we
introduce Executable Release Certification (\ExecCert{}), a release-time layer
that certifies the candidate artifact considered for release. \ExecCert{} either
closes a method's native certificate for the executed candidate or applies
Retraining-Reference Release Verification (\RRV{}) to certify fidelity to current
retain-set retraining. Sequential deletion makes the latter nontrivial because
the exact retain-set reference and the stored numerical state evolve separately.
For frozen representations with a mutable ridge head, we develop an incremental
realization of \RRV{} that maintains certified evidence across deletion requests
rather than reconstructing it at each release.
On four published unlearning implementations, \ExecCert{} preserves valid
certificates, changes release decisions, tightens conservative bounds, and
identifies the retraining-reference fidelity supported by concrete outputs. In
sequential-service experiments, \RRV{} eliminates false releases caused by
stored-equation verification while closely tracking realized error, and
incremental certification remains cheaper than both fresh and maintained
verified-factor alternatives once release checks become sufficiently frequent.
\end{abstract}

\section{Introduction}
\label{sec:introduction}
Machine-learning models may need to remove the influence of training records
when users withdraw consent, data become outdated, or errors are discovered;
regulations such as the EU's right to be forgotten provide an additional
motivation. Retraining from scratch on the retained data gives a natural
reference but can be costly, motivating machine unlearning methods that seek to
approximate retraining at lower deletion cost
\citep{ginart2019,guo2020,sekhari2021,gupta2021}. Certified unlearning further
provides mathematical guarantees relating an efficient deletion mechanism to
this reference
\citep{guo2020,sekhari2021,gupta2021,chien2024,zhang2024certified,
koloskova2025,pandey2026}. However, a deployed system releases a concrete
finite-precision artifact produced by software, rather than the mathematical
object analyzed by the guarantee. This raises a distinct deployment question:
\emph{does the artifact actually considered for release satisfy the intended
certificate or retraining-reference contract?}

Sequential deletion makes this deployment-level question especially important.
After each request, the exact retain-set system changes, while the deployed
service advances a separate finite-precision state through rounded updates.
These two systems do not always coincide: a candidate may solve the stored equation
accurately, sometimes even with zero stored residual, but still violate the contract
defined relative to current retain-set retraining.
Verifying the stored computation alone is therefore insufficient; release-time
certification must connect the executed artifact to the intended reference.

We introduce \emph{Executable Release Certification} (\ExecCert{}) to fill
the gap between the two systems. Before a candidate is released, the deployment fixes a release
contract, and \ExecCert{} certifies whether the concrete executed artifact
satisfies that contract. It supports two paths: when an unlearning method
already exposes a certificate quantity, \emph{native closure} rigorously
encloses that quantity for the executed candidate; when the contract is
defined relative to retraining on the current retain set,
\emph{Retraining-Reference Release Verification} (\RRV{}) instead certifies
the candidate directly against that reference. In both cases, the output
generator, release contract, and verifier remain separate components.

\begin{figure}[tbp]
\centering
\begin{tikzpicture}[
  x=1cm,y=1cm,
  font=\small,
  flow/.style={-{Latex[length=2.2mm,width=1.5mm]}, draw=black!70,
    line width=0.75pt},
  box/.style={rounded corners=4pt, line width=0.8pt, align=center,
    minimum height=2.45cm, inner sep=3pt},
  pathbox/.style={rounded corners=3pt, line width=0.7pt, align=center,
    minimum width=1.55cm, minimum height=0.66cm, inner sep=2pt},
  outcome/.style={rounded corners=4pt, line width=0.8pt, align=center,
    minimum width=1.90cm, minimum height=0.95cm, inner sep=2pt}
]
  \definecolor{figblue}{HTML}{68A9D3}
  \definecolor{figgray}{HTML}{8793A5}
  \definecolor{figgreen}{HTML}{49A67D}
  \definecolor{figorange}{HTML}{C58A35}
  \definecolor{figred}{HTML}{E05A62}

  \node[box, draw=figblue, fill=figblue!9, minimum width=1.90cm]
    (requests) at (0.95,0) {};
  \node[font=\bfseries, align=center] at (0.95,-0.62)
    {Deletion\\requests};
  \begin{scope}[shift={(0.95,0.58)}, draw=figblue!80!black, line width=0.75pt]
    \foreach \yy in {0.26,0,-0.26}{
      \draw (-0.48,\yy) circle (0.055);
      \draw (-0.31,\yy) -- (0.48,\yy);
    }
  \end{scope}

  \node[box, draw=figgray, fill=figgray!8, minimum width=1.90cm]
    (generator) at (3.25,0) {};
  \node[font=\bfseries, align=center] at (3.25,-0.62)
    {Unlearning\\generator};
  \begin{scope}[shift={(3.25,0.58)}, draw=figgray!80!black, line width=0.85pt]
    \draw[-{Latex[length=1.5mm]}] (145:0.40) arc (145:-120:0.40);
    \draw[-{Latex[length=1.5mm]}] (-35:0.26) arc (-35:125:0.26);
  \end{scope}

  \node[draw=figgreen, fill=figgreen!7, rounded corners=5pt,
    line width=0.9pt, minimum width=4.00cm, minimum height=2.45cm]
    (cert) at (6.60,0) {};
  \node[font=\normalsize\bfseries] at (6.60,0.57) {Executable certification};
  \node[pathbox, draw=figorange, fill=figorange!9]
    at (5.68,-0.43) {Native\\closure};
  \node[pathbox, draw=figblue, fill=figblue!9]
    at (7.52,-0.43) {RRV};

  \node[box, draw=figgreen, fill=figgreen!8, minimum width=1.65cm]
    (gate) at (9.78,0) {};
  \node[font=\bfseries, align=center] at (9.78,-0.62)
    {Policy\\gate};
  \begin{scope}[shift={(9.78,0.59)}, draw=figgray!80!black, line width=0.9pt]
    \draw (0,0.43) -- (0.38,0.25) -- (0.31,-0.20)
      .. controls (0.22,-0.45) and (0.08,-0.55) .. (0,-0.62)
      .. controls (-0.08,-0.55) and (-0.22,-0.45) .. (-0.31,-0.20)
      -- (-0.38,0.25) -- cycle;
    \draw[figgreen!80!black] (-0.18,-0.05) -- (-0.03,-0.20) -- (0.22,0.10);
  \end{scope}

  \node[outcome, draw=figgreen, fill=figgreen!8, minimum height=1.08cm]
    (release) at (11.95,0.62)
    {\textcolor{figgreen!80!black}{\Large\ding{51}}\\[-2pt]
     \textbf{Release}};
  \node[outcome, draw=figred, fill=figred!7, minimum height=1.40cm]
    (withhold) at (11.95,-0.68)
    {\textcolor{figred!85!black}{\Large\ding{55}}\\[-2pt]
     \textbf{Withhold /}\\[-1pt]\textbf{Repair}};

  \draw[flow] (requests.east) -- (generator.west);
  \draw[flow] (generator.east) -- (cert.west);
  \draw[flow] (cert.east) -- (gate.west);
  \draw[flow] (gate.east) -- (release.west);
  \draw[flow] (gate.east) -- (withhold.west);

  \draw[flow, figred!75!black] (withhold.south) -- (11.95,-1.95)
    -- node[below, font=\small, text=black!70]
      {repair, recompute, or re-encode} (3.25,-1.95)
    -- (generator.south);
\end{tikzpicture}
\caption{
\textbf{\ExecCert{} certifies a concrete unlearning artifact before release.}
The generator's finite-precision output is checked against a
predeclared contract through native-certificate closure or retraining-reference
verification (RRV). Only certified artifacts pass the policy gate; an
inconclusive candidate is withheld or repaired and then recertified.
}
\label{fig:conceptual-pipeline}
\end{figure}

Our main sequential realization is \emph{\IncrementalRRV{}} for ridge
regression heads with symmetric positive-definite (SPD) normal matrices.
\RRV{} itself certifies only the concrete candidate considered for
release. During the preceding deletion sequence, \IncrementalRRV{} maintains
certified relations between the service's finite-precision statistics and the
exact retain-set statistics, together with reusable inverse-defect evidence.
Here $d$ denotes the feature dimension of each ridge regression head.
Guarded rank-one transitions update this proof state in $O(d^2)$ work.
When a candidate is considered for release, the maintained evidence is
composed into its retraining-reference certificate. Alternatively,
\OnDemandRRV{} constructs the required evidence when certification is
requested.

Experiments on four published unlearning implementations and sequential
services show that \ExecCert{} changes release decisions, tightens or preserves
existing certificates, and characterizes the retraining-reference fidelity
supported by emitted outputs. In sequential services, \RRV{} prevents false
releases admitted by stored-equation checking while closely tracking realized
error; on frozen 3B representations with a mutable ridge head,
\IncrementalRRV{} remains cheaper than both fresh and maintained verified-factor
alternatives once release checks become sufficiently frequent.

\textbf{Contributions.}
We make three contributions.
First, we formalize \emph{executable release certification} as a distinct
layer between a mathematical unlearning mechanism and the concrete artifact
considered for release. \ExecCert{} supports both rigorous closure of a
method's native certificate and certification against current retain-set
retraining
(\cref{sec:problem,sec:common-certificate}).
Second, we develop \IncrementalRRV{} for sequential deletion, maintaining
certified relations between finite-precision service state and the exact
retain-set system together with reusable inverse-defect evidence. Guarded
rank-one transitions update this proof state in \(O(d^2)\) work
(\cref{sec:joint-certificate,sec:policy}).
Third, we evaluate \ExecCert{} on four published unlearning implementations
and sequential services, showing that executable certification has practical
consequences: it can
change release decisions, tighten or preserve existing certificates, prevent
false releases, and amortize verification when certification is frequent
(\cref{sec:experiments}).

\section{Related Work}
\label{sec:related}

Full retraining on the retain set is a natural exact reference but incurs the
cost of processing the surviving data. Exact unlearning targets the same output
distribution and need not literally rerun the original training algorithm;
certified approximate removal instead applies an efficient update and bounds
its deviation from the declared reference, through influence or Newton steps,
randomized mechanisms, or guarantees that compose across a deletion sequence
\citep{guo2020,sekhari2021,gupta2021,chien2024,koloskova2025,
zou2025highdim,pandey2026}. These results differ in what they bound, but share
an assumption: the analysis governs the mathematical update, and the deployed
artifact is taken to realize it. A second line asks instead how an external
party can be convinced that deletion happened, auditing execution traces and
provenance, probing the released model for residual influence of the deleted
records, or attaching cryptographic proofs that a prescribed computation was
carried out
\citep{thudi2022auditable,weng2024poul,eisenhofer2025verifiable,
zhang2024fragile,maheri2026zkapex}. That evidence concerns \emph{which
computation ran}, for a verifier that distrusts the operator. Ours is the
complement: the verifier is internal, the computation known, and the question
is whether the artifact about to be released meets a contract stated in exact
arithmetic. The two compose, since proving that the prescribed update ran says
nothing about the rounding it accumulated.

That numerical question has its own literature. Finite precision is known to
break guarantees derived in exact arithmetic, in differential privacy,
robustness certification, and verified training
\citep{mironov2012,voracek2023,szasz2025,jin2024round}, and verified numerics
supplies the countermeasure: rigorous enclosures for linear-system residuals,
approximate inverses, and low-rank updates
\citep{krawczyk1969,oishi2002,rump2010,hager1989}. These tools certify a
solution against a \emph{fixed} system. Cholesky downdates and iterative
refinement provide classical ways to update a numerical solve
\citep{higham2002,rump2010}; maintaining such a factor does not by itself
verify its relation to the changing exact system, so factor validation and
residual enclosures remain part of a release certificate. Sequential deletion removes that
fixture, since the retain-set system moves with every request while the
service advances a separately rounded state, so our ridge regression
realization maintains a certified relation between the two and converts it at
release time into a certificate against the current reference. ScaleGUN
studies a related transfer for graph propagation~\citep{yi2025scalegun};
\RRV{} instead transfers from finite-precision service state to the retain-set
system.

\section{From Mathematical to Executable Certificates}
\label{sec:problem}
\label{sec:release-contract}

An unlearning generator processes a declared set or sequence of deletion
requests and produces a finite-precision candidate \(\widetilde\theta\). The
deployment fixes an admissible set \(\mathcal A\) independently of that
candidate.
\ExecCert{} returns \textsc{Certified} only after establishing
\(\widetilde\theta\in\mathcal A\); otherwise the policy gate may repair,
recompute, re-encode, or withhold the candidate.

For evaluation, a candidate is \emph{conforming} when it belongs to
\(\mathcal A\), and a \emph{false release} is a nonconforming candidate
returned as \textsc{Certified}. An \textsc{Inconclusive} candidate may still
conform; a \emph{released artifact} is one that passes the policy gate.

\textbf{Notation.}
We write \(\theta^{\mathrm R}\) for the exact-real retraining reference after
the declared deletions and \(\widetilde\theta\) for the executed candidate.
Hats denote computed verifier quantities and circles exact-real evaluations of
computed intermediates; \cref{app:fp-conventions} gives the remaining
floating-point notation.

\subsection{Two \ExecCert{} Paths}
\label{sec:certificate-paths}
These paths correspond to two release requirements. Some deployments need to
verify a method's own certificate after finite-precision execution; others
specify fidelity to retraining on the current retain set, which a native
certificate may not express. Native closure addresses the first requirement,
whereas \RRV{} addresses the second.

\paragraph{Native-certificate closure.}
This path is appropriate when an unlearning method already specifies a
certificate of the form
\(q(\theta;\mathcal I)\leq b\), where \(q\) is the source certificate
quantity, \(b\) its prescribed threshold, and \(\mathcal I\) any committed
inputs or auxiliary state required by the source method. This defines the
admissible set
\(\mathcal A^{\mathrm{native}}=\{\theta\colon q(\theta;\mathcal I)\leq b\}\).
\ExecCert{} does not change this condition: it constructs a rigorous upper
bound \(q(\widetilde\theta;\mathcal I)\leq q^+\) for the executed candidate and
returns \textsc{Certified} whenever \(q^+\leq b\). Native closure therefore certifies the concrete
finite-precision artifact under the source method's own certificate.

\paragraph{Retraining-reference certification.}
This path is needed when a method's native certificate does not quantify how
close its executed output is to retraining on the current retain set. When a
deployment requires this fidelity directly, \ExecCert{} uses
\emph{\RRVFull{}} (\RRV{}).
\RRV{} defines a uniquely specified exact-real target
\(\theta^{\mathrm R}\) from the declared training objective, mutable model
component, and contract inputs. For a predeclared distance \(d\) and tolerance
\(\tau>0\), the release contract requires
\(d(\widetilde\theta,\theta^{\mathrm R})\leq\tau\).
Rather than computing \(\theta^{\mathrm R}\) exactly, \RRV{} certifies this
contract by constructing a rigorous upper bound on the distance.

For the Euclidean parameter contract, this takes the form
\begin{equation}
    \norm{\widetilde\theta-\theta^{\mathrm R}}_F
    \leq C.
    \label{eq:absolute-interface-goal}
\end{equation}
How \(C\) is obtained depends on the model class. For ridge regression heads
with symmetric positive-definite (SPD) normal matrices,
\OnDemandRRV{} constructs the required proof information for the candidate,
whereas \IncrementalRRV{} maintains it while deletion requests are processed.
The latter updates certified numerical state through rank-one transitions. For
differentiable strongly convex heads, \OnDemandRRV{} instead uses a post-update
gradient certificate, covering \(L_2\)-regularized logistic and softmax heads.

\subsection{Contracts Derived from the \RRV{} Bound}
\label{sec:downstream-contracts}

The absolute bound \(C\) can be translated into several release contracts.
For relative parameter fidelity, denote the predeclared tolerance by
\(\tau_{\rm par}\). The reverse triangle inequality gives
\begin{equation}
    \frac{\norm{\widetilde\theta-\theta^{\mathrm R}}_F}
         {\norm{\theta^{\mathrm R}}_F}
    \leq
    \frac{C}{\norm{\widetilde\theta}_F-C},
    \label{eq:relative-parameter-contract}
\end{equation}
whenever \(\norm{\widetilde\theta}_F>C\), so the contract is certified when the
right-hand side is at most \(\tau_{\rm par}\).

A deployment may instead care about outputs on a fixed set of input
representations \(Z\in\R^{m\times d}\), declared independently of the
candidate. For a parameter matrix with \(c\) output columns, it may require
\begin{equation}
    \frac{\norm{Z(\widetilde\theta-\theta^{\mathrm R})}_F}
         {\sqrt{mc}}
    \leq \tau,
    \label{eq:fixed-readout-contract}
\end{equation}
for which the parameter certificate immediately gives
\(\norm{Z(\widetilde\theta-\theta^{\mathrm R})}_F\leq\norm{Z}_2 C\), while
\cref{sec:rrv} develops a tighter error-centered certificate that preserves
the error direction, and the margin conditions in
\cref{cor:decision-agreement} further imply per-row decision agreement.
Contracts are checked separately; the same absolute bound also supports the
randomized-release and output-format interfaces in
\cref{sec:randomized-interfaces,sec:formats}.

\section{\RRVFull{}}
\label{sec:joint-certificate}
\label{sec:rrv}
Native-certificate closure is method-specific once the source certificate
quantity is fixed; its concrete instantiations appear in
\cref{sec:published-certificate-results}. The main technical development is
therefore \RRV{}, which must certify an executed candidate against a
retraining reference that is not explicitly computed.

\RRV{} can construct certificate evidence in two ways. \OnDemandRRV{}
constructs the required evidence when certification is requested, whereas
\IncrementalRRV{} reuses numerical state maintained across deletion requests.
The latter introduces an additional obligation: the
maintained finite-precision state must itself be certified against the exact
current retain-set system before it can support a release certificate.
Our ridge regression realization supports both paths through residual and
inverse-defect bounds; for differentiable strongly convex heads,
\OnDemandRRV{} uses a post-update gradient certificate.

The interface in \cref{sec:release-contract} concerns a single candidate after
a declared deletion sequence. From this point onward, we index internal service
state by deletion step \(t\) only to describe how incremental evidence evolves.
The same machinery also accommodates long-running services that expose
intermediate snapshots, in which case a declared subset of internal states may
serve as release checkpoints.

\subsection{Release-Time \RRV{} Certificates}
\label{sec:common-certificate}

For a candidate considered for release, \RRV{} must bound its distance to the
exact retraining reference \(\theta^{\mathrm R}\) without explicitly computing
that reference. We give two constructions of this parameter-error bound.

\paragraph{Ridge regression forward certificate.}
Let the exact retraining reference satisfy \(H\theta^{\mathrm R}=B\) with
\(H\in\R^{d\times d}\) nonsingular, and let \(P\) approximate \(H^{-1}\). Since
\(P(B-H\widetilde\theta)=PH(\theta^{\mathrm R}-\widetilde\theta)\), the
corrected residual would equal the parameter error if \(P=H^{-1}\). With an
approximate inverse, the verifier instead controls both the inverse defect and
the corrected residual:
\begin{equation}
    \norm{I_d-PH}_2\leq\delta<1,
    \qquad
    \norm{P(B-H\widetilde\theta)}_F\leq\rho .
    \label{eq:delta-rho-contract}
\end{equation}

\begin{lemma}[Retraining-reference forward certificate]
\label{thm:joint}
Under \cref{eq:delta-rho-contract},
\[
\norm{\widetilde\theta-\theta^{\mathrm R}}_F
\leq
C
:=
\frac{\rho}{1-\delta}.
\]
\end{lemma}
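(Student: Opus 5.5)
The approach is a perturbation argument around the identity. Write the error as \(E:=\widetilde\theta-\theta^{\mathrm R}\in\R^{d\times c}\) (a matrix when there are several output columns, hence the Frobenius norm). The starting point is the identity already noted before \cref{eq:delta-rho-contract}: since \(H\theta^{\mathrm R}=B\), we have \(B-H\widetilde\theta=H(\theta^{\mathrm R}-\widetilde\theta)=-HE\). Hence \(P(B-H\widetilde\theta)=-PHE\), and the second condition in \cref{eq:delta-rho-contract} gives \(\norm{PHE}_F\leq\rho\).

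Next, split \(E=(I_d-PH)E+PHE\). The triangle inequality, together with the mixed-norm inequality \(\norm{AX}_F\leq\norm{A}_2\norm{X}_F\), then yields
\[
\norm{E}_F\leq\norm{I_d-PH}_2\norm{E}_F+\norm{PHE}_F\leq\delta\norm{E}_F+\rho .
\]
The mixed-norm inequality holds because \(\norm{AX}_F^2=\sum_j\norm{Ax_j}_2^2\leq\norm{A}_2^2\sum_j\norm{x_j}_2^2\), summing over the columns \(x_j\) of \(X\). Since \(\delta<1\) and \(\norm{E}_F\) is finite, rearranging gives \((1-\delta)\norm{E}_F\leq\rho\), which is the claimed bound \(C=\rho/(1-\delta)\).

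The argument has no real obstacle, but one point should be made explicit: it does not assume \(P\) is invertible, and \(\theta^{\mathrm R}\) exists because \(H\) is nonsingular. Invertibility of \(PH\) in fact follows from \(\delta<1\) through a Neumann series, but the absorption step above does not need it. The only care needed is to use the spectral norm for the defect and the Frobenius norm for the residual, so that the mixed inequality applies column-wise.
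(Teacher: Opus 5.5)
Your proof is correct and is essentially the paper's argument. The paper writes \(\Delta=\theta^{\mathrm R}-\widetilde\theta=(I_d-PH)\Delta+P(B-H\widetilde\theta)\) and absorbs \(\delta\norm{\Delta}_F\) exactly as you do; your column-wise justification of \(\norm{AX}_F\leq\norm{A}_2\norm{X}_F\) is a detail the paper leaves implicit.
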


\paragraph{Strongly-convex post-update certificate.}
For a differentiable strongly convex objective, \RRV{} can instead construct
the parameter certificate directly from the final candidate. If \(g^+\)
upper-bounds its gradient norm under the current retain-set objective and
\(\mu^-\) lower-bounds the objective's strong-convexity constant, then
\(C=g^+/\mu^-\) bounds the candidate's distance to the exact minimizer.
This post-update
construction requires no inverse proof state and applies, for example, to
\(L_2\)-regularized logistic and softmax heads. The formal statement and
executable construction appear in
\cref{prop:strongly-convex-post-update,app:published-output-applications}.

For fixed readouts, the ridge regression construction admits a tighter directional
refinement. For a predeclared set of input representations
\(Z\in\R^{m\times d}\), the parameter certificate gives
\(\norm{Z(\theta^{\mathrm R}-\widetilde\theta)}_F\leq\norm Z_2 C\),
which treats the direction of the parameter error as unknown. The corrected
residual also provides a computed center \(\widehat\Delta\) for that error,
allowing the remaining uncertainty to be bounded around an estimated
direction.

\begin{proposition}[Error-centered readout certificate]
\label{prop:fixed-readout-error-center}
Let \(\Delta=\theta^{\mathrm R}-\widetilde\theta\) and suppose
\(\norm{\Delta}_F\leq C\) and
\(\norm{\Delta-\widehat\Delta}_F\leq R_\Delta\). Then, for any fixed
\(Z\in\R^{m\times d}\),
\begin{equation}
 \norm{Z\Delta}_F
 \leq
 \min\!\bigl\{
   \norm Z_2 C,
   \norm{Z\widehat\Delta}_F+\norm Z_2 R_\Delta
 \bigr\}.
 \label{eq:fixed-readout-bound}
\end{equation}
\end{proposition}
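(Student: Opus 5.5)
The plan is to prove each of the two terms inside the minimum separately as an upper bound on \(\norm{Z\Delta}_F\). Once both hold, \(\norm{Z\Delta}_F\) is at most their minimum. The only tool needed is the compatibility of the Frobenius norm with the spectral norm: for \(Z\in\R^{m\times d}\) and any \(X\in\R^{d\times c}\), we have \(\norm{ZX}_F\leq\norm Z_2\norm X_F\). If this is not taken as standard, I would justify it column by column. Writing \(X=[x_1,\dots,x_c]\), we get \(\norm{ZX}_F^2=\sum_j\norm{Zx_j}_2^2\leq\norm Z_2^2\sum_j\norm{x_j}_2^2=\norm Z_2^2\norm X_F^2\).

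For the first term, apply this inequality with \(X=\Delta\) and use the hypothesis \(\norm{\Delta}_F\leq C\). This gives \(\norm{Z\Delta}_F\leq\norm Z_2 C\). When \(C\) comes from \cref{thm:joint}, this is exactly the bound already noted after \cref{eq:fixed-readout-contract}.

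For the second term, I would center the error at the computed direction by writing \(Z\Delta=Z\widehat\Delta+Z(\Delta-\widehat\Delta)\). The triangle inequality for \(\norm{\cdot}_F\) then gives \(\norm{Z\Delta}_F\leq\norm{Z\widehat\Delta}_F+\norm{Z(\Delta-\widehat\Delta)}_F\). Applying the compatibility inequality to the second summand with \(X=\Delta-\widehat\Delta\), together with \(\norm{\Delta-\widehat\Delta}_F\leq R_\Delta\), yields \(\norm{Z\Delta}_F\leq\norm{Z\widehat\Delta}_F+\norm Z_2R_\Delta\).

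Combining the two bounds gives \cref{eq:fixed-readout-bound}. The argument is elementary, and the only step needing care is the submultiplicativity of \(\norm Z_2\) against the Frobenius norm, handled column-wise above. The bound depends only on the two stated hypotheses, so it holds for any fixed \(Z\). How \(C\), \(\widehat\Delta\) and \(R_\Delta\) are rigorously enclosed in floating point is a separate matter for the executable construction, not for this proposition.
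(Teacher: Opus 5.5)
Your proposal is correct and matches the paper's proof: both bounds come from \(\norm{ZX}_F\le\norm Z_2\norm X_F\), and the second follows after splitting \(\Delta=\widehat\Delta+(\Delta-\widehat\Delta)\) and applying the triangle inequality, so the minimum of the two valid bounds is again a bound. Your column-wise proof of the Frobenius--spectral compatibility inequality is a small step that the paper leaves implicit.
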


We denote the resulting generic and centered bounds by
\(C_Z^{\rm ball}=\norm Z_2 C\) and
\(C_Z^{\rm ctr}=\norm{Z\widehat\Delta}_F+\norm Z_2 R_\Delta\).
The centered bound preserves the estimated error direction and applies the
worst-case bound only to its remaining uncertainty. Its executable
construction appears in \cref{app:fixed-readout-executable}. The same
certificate yields binary score-sign and multiclass top-1 agreement under the
margin conditions in
\cref{cor:decision-agreement}.

\subsection{Maintaining Reference-Transfer Evidence}
\label{sec:proof-state}
\label{sec:propagation}

When a deployment chooses to maintain certificate evidence incrementally,
the verifier must relate the service's evolving finite-precision state to the
exact current retain-set system. The release-time certificate in
\cref{sec:common-certificate} ultimately concerns the latter, while the
service updates a separate stored numerical state. \IncrementalRRV{} therefore
maintains certified evidence connecting these two systems throughout the
deletion sequence.

Let \(D_t=\{(x_i,y_i)\}_{i=1}^{n_t}\) be the retain set after \(t\) committed
deletions, with feature and target matrices \(X_t\) and \(Y_t\). For
\(\lambda>0\), define
\begin{equation}
    H_t=X_t^\top X_t+\lambda I_d,
    \qquad
    B_t=X_t^\top Y_t,
    \qquad
    H_t\theta_t^{\mathrm R}=B_t.
    \label{eq:ridge-system}
\end{equation}
Deleting \((x,y)\) changes the exact statistics by
\(H_{t+1}=H_t-xx^\top\) and \(B_{t+1}=B_t-xy^\top\).

The service instead stores finite-precision statistics
\(\widetilde H_t,\widetilde B_t\). \IncrementalRRV{} tracks their discrepancy
from the exact retain-set statistics through componentwise enclosures
\begin{equation}
    \abs{\widetilde H_t-H_t}\leq\mathcal U_{H,t},
    \qquad
    \abs{\widetilde B_t-B_t}\leq\mathcal U_{B,t}.
    \label{eq:stored-retained-budgets}
\end{equation}
These bounds certify how far the numerical state maintained by the service may
have drifted from the exact retain-set system.

The second maintained component is reusable inverse evidence. Rather than
reconstructing an inverse certificate after every deletion,
\IncrementalRRV{} carries an approximate inverse \(P_t\), a computed defect
tracker \(\widehat E_t\), and a certified radius \(r_t\) satisfying
\begin{equation}
    \norm{
        I_d-P_t\widetilde H_t-\widehat E_t
    }_F
    \leq r_t.
    \label{eq:defect-tracker-invariant}
\end{equation}
For each signed rank-one change, a guarded Sherman--Morrison transition updates
\(P_t\), the defect tracker, and the statistics enclosures. The propagated
radius accounts for the previous certified uncertainty together with
arithmetic error in the update and tracker evaluation. Every denominator,
matrix-vector product, and state transition is checked before commit; a failed
guard invokes the configured rebuild. A successful rank-one transition costs
\(O(d^2)\), compared with \(O(d^3)\) for dense inverse reconstruction.

At release, these two maintained components are combined to recover the
quantities required by the certificate in
\cref{sec:common-certificate}. Suppressing \(t\), let
\(E=I_d-P\widetilde H\). The exact retain-set residual and inverse defect obey
\begin{equation}
\begin{aligned}
    B-H\widetilde\theta
    &=
    (\widetilde B-\widetilde H\widetilde\theta)
    +(B-\widetilde B)
    -(H-\widetilde H)\widetilde\theta,\\
    I_d-PH
    &=
    E+P(\widetilde H-H).
\end{aligned}
\label{eq:reference-transfer-identity}
\end{equation}
The first identity transfers the stored-system residual to the exact
retain-set residual; the second transfers the stored-system inverse defect to
the exact retain-set system.

Here and below, an overline denotes a certified upper bound on the enclosed
norm, and \(\operatorname{up}(\cdot)\) denotes componentwise outward-rounded
evaluation.

Let
\(\widehat R=\fl(\widetilde B-\widetilde H\widetilde\theta)\),
let \(\mathcal U_R\) enclose residual-evaluation error together with the
stored-to-retain-set correction, and let
\(\varepsilon_{P\widehat R}\) enclose the arithmetic error in
\(P\widehat R\). Combining the maintained statistics and inverse evidence gives
\begin{equation}
\boxed{
\begin{aligned}
 \norm{I_d-PH}_2
 &\leq \delta_{\rm ret}
 :=\operatorname{up}\!\left(
   \delta_{\rm stored}
   +\overline{\norm P_2}\,
    \overline{\norm{\mathcal U_H}_2}\right),\\
 \norm{P(B-H\widetilde\theta)}_F
 &\leq \rho_{\rm ret}
 :=\operatorname{up}\!\left(
   \underbrace{\norm{\fl(P\widehat R)}_F}_{\text{corrected residual}}
   +\underbrace{\varepsilon_{P\widehat R}}_{\text{evaluation error}}
   +\underbrace{
      \norm{\operatorname{up}(\abs P\mathcal U_R)}_F
    }_{\text{reference-transfer error}}\right),\\
 C&=\frac{\rho_{\rm ret}}{1-\delta_{\rm ret}}.
\end{aligned}}
\label{eq:reference-transfer-certificate}
\end{equation}
Thus the evidence maintained during the deletion sequence is converted only
when certification is required into the exact retain-set quantities consumed
by the release-time certificate.

The componentwise outward-rounded construction appears in
\cref{app:joint_bounds}; the rank-one recurrence and propagation proofs appear
in
\cref{app:nonsymmetric-recurrence,app:statistics-error-invariant,%
app:sequential-inverse}.

\subsection{Transactional Release Conformance}
\label{sec:policy}

A release-time certificate may be constructed by either \OnDemandRRV{} or
\IncrementalRRV{} as in \cref{sec:proof-state}. In either case, the certificate
must apply to the exact artifact that becomes visible.

Let \(\mathcal R\) denote the declared release checkpoints. At
\(t\notin\mathcal R\), the service may commit verified internal state without
emitting an artifact. At \(t\in\mathcal R\), the current candidate is released
only after every declared contract certifies. Any repair, recomputation, or
re-encoding that changes the candidate produces a new artifact and therefore
requires recertification. A proof-state rebuild, by contrast, changes only the
verifier evidence and leaves the candidate itself unchanged.

\begin{theorem}[\IncrementalRRV{} release conformance]
\label{thm:policy}
Suppose \IncrementalRRV{} is initialized with valid statistics and
inverse-defect enclosures, accepts only guarded transitions that preserve both
invariants, and at each declared release checkpoint applies
\cref{eq:reference-transfer-certificate} and the declared contract checks to
the concrete candidate. If candidate-changing remediation is recorded as a
new candidate and recertified, and release is atomic with respect to the
certified artifact, then every released artifact satisfies every declared
retraining-reference contract.
\end{theorem}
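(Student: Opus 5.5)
The argument is an induction over deletion steps that keeps the two maintained invariants alive, followed by a release-time composition with \cref{thm:joint}. The first step is to show by induction on \(t\) that \cref{eq:stored-retained-budgets} and \cref{eq:defect-tracker-invariant} hold at every committed state. The base case is the hypothesis of valid initial enclosures. For the inductive step we only need the hypothesis that every accepted guarded transition preserves both invariants. There is one subtlety here. A rejected guard triggers a rebuild, and that rebuild must itself produce valid enclosures for the current retain set. I would treat a rebuild as a re-initialization, so it falls under the same hypothesis, and note that it changes only verifier evidence, never the candidate. With these hypotheses the induction is essentially bookkeeping.

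Next, fix a checkpoint \(t\in\mathcal R\) and the concrete candidate \(\widetilde\theta\) handed to the verifier. I would derive \cref{eq:delta-rho-contract} for the \emph{exact} pair \((H_t,B_t)\) from the maintained invariants via \cref{eq:reference-transfer-identity}. For the defect, the triangle inequality on \(I_d-PH=E+P(\widetilde H-H)\) gives
\[
\norm{I_d-PH}_2\le\norm{E}_2+\norm P_2\norm{\widetilde H-H}_2 .
\]
Two facts then close this estimate. First, \(\delta_{\rm stored}\) bounds \(\norm E_2\), which follows from \cref{eq:defect-tracker-invariant} together with \(\norm{\widehat E}_2+r\) and \(\norm\cdot_2\le\norm\cdot_F\). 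Second, \(\norm{\widetilde H-H}_2\le\norm{\mathcal U_H}_2\), because the spectral norm is monotone under componentwise absolute domination. For the residual, I would write
\[
P(B-H\widetilde\theta)=\fl(P\widehat R)+\bigl(P\widehat R-\fl(P\widehat R)\bigr)+P\bigl(R_{\rm exact}-\widehat R\bigr).
\]
The first two terms are bounded by the corrected-residual and evaluation-error terms. For the third, \(\abs{R_{\rm exact}-\widehat R}\le\mathcal U_R\) componentwise by the first identity in \cref{eq:reference-transfer-identity} and the definition of \(\mathcal U_R\). That gives \(\abs{P(\cdot)}\le\abs P\mathcal U_R\) and hence the Frobenius bound. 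Outward rounding in \(\operatorname{up}\) keeps every computed quantity an upper bound.

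The declared checks certify a contract only when \(\delta_{\rm ret}<1\), so \cref{thm:joint} applies and yields \(\norm{\widetilde\theta-\theta^{\mathrm R}}_F\le C\). Each declared contract is then implied by its derived check. The relative contract follows from \cref{eq:relative-parameter-contract}, and the fixed-readout contract follows from \cref{prop:fixed-readout-error-center}. Any other contract in \cref{sec:downstream-contracts} follows from its own lemma, which we may assume.

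Finally, I would close the gap between the certified object and the released one. Any candidate-changing remediation creates a new candidate, which by hypothesis is recertified, so the argument above applies to it afresh. Atomicity ensures that the artifact made visible is bitwise the one certified. Hence every released artifact satisfies every declared contract. I expect the main obstacle to be the second step: justifying that the componentwise enclosures transfer correctly to the spectral and Frobenius norms, and that the residual enclosure \(\mathcal U_R\) really covers both the \(\fl\) error and the stored-to-exact correction \((B-\widetilde B)-(H-\widetilde H)\widetilde\theta\). I would discharge this by citing the appendix construction referenced after \cref{eq:reference-transfer-certificate}.
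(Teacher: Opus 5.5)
Your proposal is correct and follows essentially the same route as the paper. The paper carries the two invariants by induction from initialization through accepted guarded transitions, citing \cref{thm:statistics-error-bound,thm:radius} for the concrete realization. It then converts them at the checkpoint through the transfer identities into the hypotheses of \cref{thm:joint} and the declared contract checks (\cref{app:joint_bounds,app:release-certificates}), and uses recertification of changed candidates plus atomic release to reach the visible artifact. Your inline bound \(\norm{\widetilde H-H}_2\leq\norm{\mathcal U_H}_2\) via spectral-norm monotonicity under componentwise domination is a harmless variant of the paper's \(\sqrt{\norm{\cdot}_1\norm{\cdot}_\infty}\) argument, since \(\overline{\norm{\mathcal U_H}_2}\) dominates \(\norm{\mathcal U_H}_2\) either way.
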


The theorem states the concrete \IncrementalRRV{} initialization, transition,
reference-transfer, contract-checking, and transactional-release obligations;
\OnDemandRRV{} is the special case that constructs sound evidence only at a
release checkpoint. Proofs and precise transactional assumptions appear in
\cref{app:transactional-safety-proof}. Successful
rank-\(k\) incremental propagation costs \(O(kd^2+kdc)\), excluding output
generation; complete release and rebuild accounting appears in
\cref{app:transactional-safety-proof}.

\subsection{Derived Release Contracts}

\label{sec:randomized-interfaces}
\label{sec:gaussian-interface}
The same bound composes with Gaussian and Laplace release guarantees, including
executable closure of Newton direct-perturbation radii
~\citep{pandey2026,zou2025highdim}; the full interfaces appear in
\cref{app:randomized-release-interfaces}.

\paragraph{Format compatibility.}
\label{sec:formats}
A final representation may itself be incapable of satisfying the declared
contract. Let \(\widehat\theta\) be a certified reference center with
\(\norm{\widehat\theta-\theta^{\mathrm R}}_F\leq r_{\rm ref}\). For a finite
scalar format \(\mathcal F\subset\R\), let
\(\underline d_{\mathcal F}(\widehat\theta)\) be a certified lower bound on the
distance from \(\widehat\theta\) to \(\mathcal F^{d\times c}\), and let
\(\overline s_{\rm ref}\) be a certified upper bound on
\(\norm{\widehat\theta}_F\).

\begin{proposition}[Finite-format impossibility]
\label{prop:format-impossibility}
Assume \(\theta^{\mathrm R}\ne0\).  If
\begin{equation}
 \underline d_{\mathcal F}(\widehat\theta)-r_{\rm ref}
 >
 \tau_{\rm par}\left(
 \overline s_{\rm ref}+r_{\rm ref}
 \right),
 \label{eq:format-impossibility}
\end{equation}
then no \(\theta_{\mathcal F}\in\mathcal F^{d\times c}\) satisfies the relative
contract in \cref{eq:relative-parameter-contract}.
\end{proposition}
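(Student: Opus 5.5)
The argument is by contradiction. Suppose some \(\theta_{\mathcal F}\in\mathcal F^{d\times c}\) satisfies the relative contract. We take this to mean the exact quantity \(\norm{\theta_{\mathcal F}-\theta^{\mathrm R}}_F\leq\tau_{\rm par}\norm{\theta^{\mathrm R}}_F\), i.e.\ the left-hand side of \cref{eq:relative-parameter-contract} is at most \(\tau_{\rm par}\). This is well defined because \(\theta^{\mathrm R}\ne0\). We do not use the certified right-hand side of \cref{eq:relative-parameter-contract}: if even the exact relative error cannot be small, no certificate can be either. The plan is to sandwich \(\norm{\theta_{\mathcal F}-\theta^{\mathrm R}}_F\) between a lower bound coming from the format distance and an upper bound coming from the contract, and then show the two are incompatible under \cref{eq:format-impossibility}.

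\textbf{Lower bound.} Since \(\theta_{\mathcal F}\in\mathcal F^{d\times c}\), the definition of the certified lower bound gives \(\norm{\theta_{\mathcal F}-\widehat\theta}_F\geq\underline d_{\mathcal F}(\widehat\theta)\). The reverse triangle inequality, combined with \(\norm{\widehat\theta-\theta^{\mathrm R}}_F\leq r_{\rm ref}\), then yields
\[
\norm{\theta_{\mathcal F}-\theta^{\mathrm R}}_F\geq\norm{\theta_{\mathcal F}-\widehat\theta}_F-\norm{\widehat\theta-\theta^{\mathrm R}}_F\geq\underline d_{\mathcal F}(\widehat\theta)-r_{\rm ref}.
\]

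\textbf{Upper bound.} From the triangle inequality, \(\norm{\theta^{\mathrm R}}_F\leq\norm{\widehat\theta}_F+r_{\rm ref}\leq\overline s_{\rm ref}+r_{\rm ref}\). Together with the assumed contract and \(\tau_{\rm par}\ge0\), this gives
\[
\norm{\theta_{\mathcal F}-\theta^{\mathrm R}}_F\leq\tau_{\rm par}\norm{\theta^{\mathrm R}}_F\leq\tau_{\rm par}(\overline s_{\rm ref}+r_{\rm ref}).
\]

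\textbf{Contradiction.} Chaining the two bounds gives \(\underline d_{\mathcal F}(\widehat\theta)-r_{\rm ref}\leq\tau_{\rm par}(\overline s_{\rm ref}+r_{\rm ref})\), which contradicts the strict inequality \cref{eq:format-impossibility}. Hence no \(\theta_{\mathcal F}\) in the format satisfies the contract.

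There is no real technical obstacle here. The one point needing care is fixing the meaning of "satisfies the relative contract": it must be the exact relative error, not its certified bound, and both bounds must point in the correct directions. That is, \(\underline d_{\mathcal F}\) is a lower bound over the whole format, and \(\overline s_{\rm ref}\) is an upper bound on the norm of the center.
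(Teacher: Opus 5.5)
Your proposal is correct and follows essentially the same argument as the paper: the reverse triangle inequality with the reference enclosure lower-bounds $\norm{\theta_{\mathcal F}-\theta^{\mathrm R}}_F$ by $\underline d_{\mathcal F}(\widehat\theta)-r_{\rm ref}$, and the triangle inequality upper-bounds $\norm{\theta^{\mathrm R}}_F$ by $\overline s_{\rm ref}+r_{\rm ref}$, so the strict hypothesis rules out every representable tensor. Your explicit reading of ``satisfies the contract'' as the exact relative error is a sensible clarification; the paper leaves it implicit.
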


The format is then infeasible for the declared contract; the proof and
codebook construction appear in
\cref{app:format-impossibility-proof,app:format-reachability}.

\section{Experiments}
\label{sec:experiments}
\textbf{Evaluation questions.}
We ask whether reference transfer prevents false releases without making
certificates overly conservative (\cref{sec:stateful-rrv-results}), whether
incremental evidence reduces lifecycle cost without weakening certification
(\cref{sec:guarantee-matched-results}), and how \ExecCert{} affects four
published methods without changing their update rules
(\cref{sec:published-certificate-results}). We then test when producer
precision or output representation limits the contract
(\cref{sec:llm-format-remediation}).

The sequential service certifies a \(d=3073\) binary ridge head on frozen
Llama-3.2-3B-Instruct~\citep{meta2024llama32} representations. Controlled
diagnostics use DINOv2~\citep{oquab2024dinov2learningrobustvisual} on
CIFAR-10~\citep{CIFAR_10} and RoBERTa~\citep{liu2019robertarobustlyoptimizedbert}
on AG-News~\citep{AGNEWS}; the published applications cover linear, logistic,
and graph methods. Comparisons hold the candidate and contract fixed.
Independent retraining and high-precision intervals supply offline labels.
Protocols and provenance appear in
\cref{app:published-method-applications,app:service-evaluation,%
app:reference-reproducibility}.

\subsection{Reference-Transfer Soundness and Tightness}
\label{sec:stateful-rrv-results}
\label{sec:release-policy-results}

Stored-equation verification admits nearly all nonconforming candidates on the
matched stress population, whereas \RRV{} produces no false release
(\cref{tab:hard-stream-conformance}). On the FP32 stress population,
\IncrementalRRV{} initially certifies 4,838 of 7,293 conforming outputs without
a false release; proof-only rebuilds raise conditional coverage to 67.24\%,
compared with 99.97\% for fresh-factor verification
(\cref{sec:llm-format-remediation}). On a separate fixed population, \RRV{}
encloses every realized error across the tolerance sweep, and its certificate
remains within about 1\% of the independent reference for 95\% of outputs
(\cref{fig:release-contract-frontier}).

\begin{figure}[tbp]
  \centering
  \includegraphics[width=\textwidth]{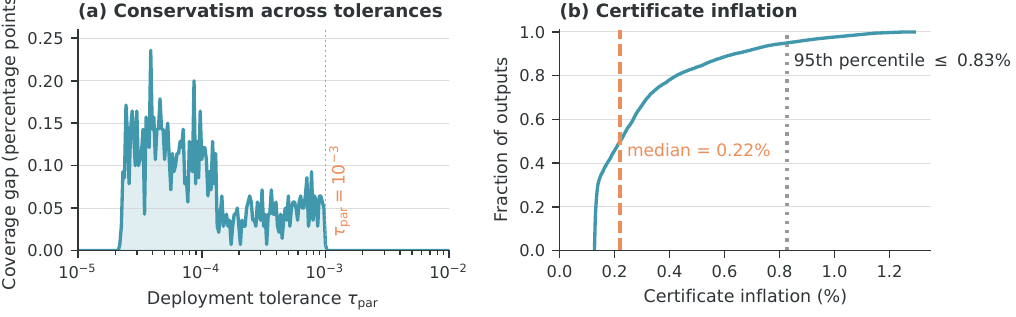}
  \caption{\textbf{\RRV{} certificates closely track realized error.}
  Incremental componentwise \RRV{} and independent retained-row reconstruction
  evaluate the same fixed FP32 deletion outputs at every tolerance. Coverage
  gap is the conforming fraction minus the certified
  fraction; certificate inflation is
  \(\tau_{\rm cert}/\tau_{\rm true}-1\). No rebuild or producer remediation
  changes the population.}
  \label{fig:release-contract-frontier}
\end{figure}

\subsection{Evidence Reuse in Sequential Services}
\label{sec:guarantee-matched-results}

We compare fresh and maintained verified factors with \IncrementalRRV{} on the
same candidate outputs, fixed readout, and release contract. A direction-free
incremental variant tests whether reuse alone is sufficient.

\begin{table}[tbp]
  \caption{\textbf{Verification of the same frozen 3B safety-head outputs.}
  Latency summarizes 10 repetitions at each of 21 states.
  No method scans retained rows; state excludes those rows.}
  \label{tab:guarantee-matched-service}
  \centering\footnotesize
  \setlength{\tabcolsep}{1.8pt}
  \renewcommand{\arraystretch}{1.06}
  \begin{tabular*}{\columnwidth}{@{\extracolsep{\fill}}lcrr@{}}
    \toprule
    Method & \shortstack{Certified\\release states} & Med./P95 (s) & State (MiB) \\
    \midrule
    \OnDemandRRV{} (fresh factor) & 21/21 & 2.611/3.593 & 286.5 \\
    Maintained verified factor & 21/21 & 2.453/3.678 & 358.6 \\
    \IncrementalRRV{} (norm-ball) & 0/21 & 0.801/0.859 & 360.3 \\
    \IncrementalRRV{} (directional) & 21/21 & 1.056/1.163 & 430.6 \\
    \bottomrule
  \end{tabular*}
\end{table}

The maintained factor removes the weak-baseline explanation: it matches the
coverage of fresh factor, while directional \IncrementalRRV{} uses roughly 40\% of
fresh-factor latency at matched coverage
(\cref{tab:guarantee-matched-service}). The norm-ball variant fails on the same
states, so reuse preserves strength only when the proof state retains
contract-relevant error direction. This ordering holds throughout the nonempty
tolerance sweep. Replacing the readout-norm upper bound by a certified lower
bound still leaves every norm-ball certificate above tolerance, ruling out
readout-norm looseness as the explanation. A separate decision audit covers
about 95\% of declared safety-head decisions; observed agreement is not used
as certified coverage.

Measured-component lifecycle accounting combines complete stream replays with
the paired release-check timings. Fresh factor is cheapest for sparse checks;
directional \IncrementalRRV{} crosses it at about 22 checks per 1,000 requests
and remains cheaper than maintained factor throughout the evaluated schedules
(\cref{fig:deployment-tradeoffs}\textup{(a)}). Under every-request checking, it
reduces lifecycle cost by 42.7\% relative to maintained factor. The crossover
is workload-specific because it trades continuous proof maintenance against
release-time construction.

\subsection{Published-Method Applications}
\label{sec:published-certificate-results}

We attach \ExecCert{} to four existing unlearning methods: Newton-update
removal~\citep{guo2020}, Newton direct perturbation~\citep{zou2025highdim},
ScaleGUN~\citep{yi2025scalegun}, and CEU~\citep{wu2023ceu}. In each case, the
original update rule and emitted candidate are left unchanged; \ExecCert{} is
applied only at release time. The check closes either the method's native
certificate quantity or a predeclared retraining-reference contract.

\begin{table}[tbp]
  \caption{\textbf{Executable certification on published unlearning methods.}
  \ExecCert{} is applied to the emitted candidate without changing the
  underlying update rule.}
  \label{tab:published-method-applications}
  \centering
  \small
  \setlength{\tabcolsep}{4pt}
  \renewcommand{\arraystretch}{1.12}
  \begin{tabularx}{\linewidth}{@{}>{\raggedright\arraybackslash}p{0.25\linewidth}
      >{\raggedright\arraybackslash}X@{}}
    \toprule
    Method & Release-time outcome \\
    \midrule
    Newton removal
      & Native closure shifts the retraining boundary by
        \textbf{0--4 requests}; direct recertification also identifies
        conservative source stops. \\
    Newton direct perturbation
      & At FP32, the median direct bound is \textbf{2.28$\times$ tighter}
        than composition; paired FP64 results retain an absolute gap. \\
    ScaleGUN
      & All \textbf{15/15} cheap-update certificates
        remain valid. \\
    CEU
      & Median realized error is \textbf{8.9\%}; the median certified upper
        bound is \textbf{14.1\%}. \\
    \bottomrule
  \end{tabularx}
\end{table}

Native closure can move Newton-removal retraining earlier, while direct
recertification identifies conservative source stops. For Newton direct
perturbation, direct certification remains tighter than direction-free
composition under both execution precisions; we report an absolute FP64 gap
because the direct bound can approach zero. \ExecCert{} preserves the
ScaleGUN cheap-update certificates. For CEU, the declared tolerance is not
supported, but post-update \RRV{} certifies the weaker fidelity of the emitted
heads. These cases show that release-time certification can change a decision,
tighten or preserve a certificate, or identify the fidelity supported by an
output. Source instrumentation and supplementary results appear in
\cref{app:published-method-applications,app:theorem-input-audits}.

\begin{figure}[tbp]
  \centering
  \begin{minipage}[t]{0.485\linewidth}
    \centering
    \textbf{(a) Evidence reuse}

    \vspace{2pt}
    \includegraphics[width=\linewidth]{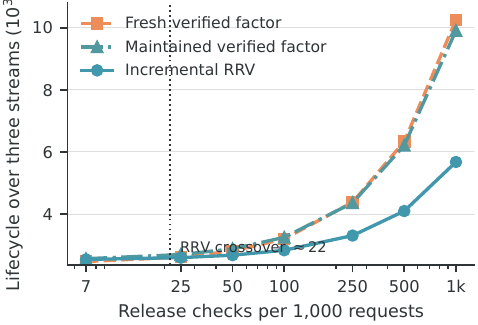}
  \end{minipage}
  \hfill
  \begin{minipage}[t]{0.485\linewidth}
    \centering
    \textbf{(b) Format feasibility}

    \vspace{2pt}
    \includegraphics[width=\linewidth]{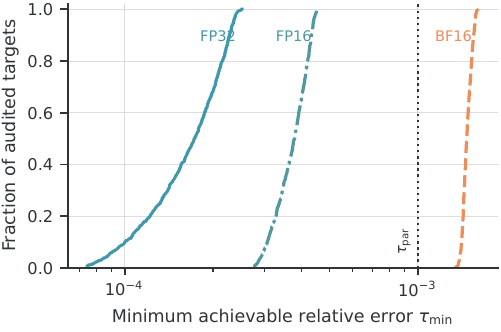}
  \end{minipage}
  \caption{\textbf{Deployment choices affect certification cost and
  feasibility.} (a) Full-stream maintenance measurements combined with matched
  release-check timings produce the lifecycle curves. The maintained-factor
  crossover with fresh construction lies beyond the plotted low-frequency
  region, at about 214 checks per 1,000 requests.
  (b) The horizontal axis is the minimum achievable relative error
  \(\tau_{\min}\). A format is ruled out when its certified lower bound exceeds
  the declared contract \(\tau_{\rm par}\); BF16 is unreachable, while
  equal-size FP16 remains reachable.}
  \label{fig:deployment-tradeoffs}
\end{figure}

\subsection{Precision and Representation Boundaries}
\label{sec:llm-format-remediation}

The preceding experiments vary certificate construction for a fixed candidate.
We finally ask when the candidate or its representation, rather than the
verifier, limits the contract.

\textbf{Producer precision.}
With FP32 statistics, fresh-factor \OnDemandRRV{} covers materially more
conforming candidates than \IncrementalRRV{} after proof-only rebuilds; both
paths attain complete coverage with the FP64 producer. Changing precision also
changes the emitted candidate, so this is a producer-policy comparison rather
than a certificate-only comparison. Full timing and state results appear in
\cref{tab:precision-boundary,app:sequential-proof-state-behavior}.

\textbf{Release format.}
BF16 cannot represent any artifact satisfying the declared tolerance, whereas
FP16 and FP32 remain feasible (\cref{fig:deployment-tradeoffs}\textup{(b)}).
Thus, some failures require changing the producer or encoding rather than
tightening verification.

\section{Discussion and Conclusion}
\label{sec:discussion}
\label{sec:conclusion}

Our results separate three release-time failure modes: a mathematical
certificate may not close for the executed artifact, a sound verifier may be
too weak or costly for the release schedule, or the producer and output format
may preclude the contract. \ExecCert{} identifies whether remediation should
target verification or the released artifact.

Certification strategy depends on the contract and release schedule. Native
closure uses the quantity governing a source guarantee; \RRV{} targets fidelity
to current retain-set retraining. On-demand construction favors sparse checks,
while incremental proof state amortizes frequent certification. \ExecCert{}
supplements the upstream guarantee: parameter or fixed-readout fidelity does
not establish semantic forgetting or multi-round adaptive privacy. Our
incremental implementation covers frozen representations with
moderate-dimensional mutable ridge heads; \cref{app:scope} gives the remaining
conditions.

\subsection*{AI Use Statement}
We used generative AI tools to assist with code implementation and refactoring,
figure and \LaTeX{} editing, and language polishing. The authors reviewed and
verified all mathematical statements, experimental designs, interpretations of
source methods, code changes, results, and manuscript text. We take
responsibility for the final content of this work, including text, claims, and
artifacts produced with the aid of generative AI.

\subsection*{Reproducibility Statement}
The appendix provides complete proofs, executable-verification details,
published-source provenance, experimental protocols, and independent reference
construction. The accompanying artifact contains the implementation,
fixed configurations, source instrumentation patches, and canonical records
used for the reported tables and figures.

\bibliography{references}
\bibliographystyle{iclr2027_conference}

\clearpage
\appendix
\tcolorboxenvironment{theorem}{
  enhanced, breakable, colback=blue!2, colframe=blue!35!gray,
  boxrule=0.55pt, arc=1mm, left=1mm, right=1mm,
  top=0.8mm, bottom=0.8mm, before skip=5pt, after skip=5pt
}
\tcolorboxenvironment{proposition}{
  enhanced, breakable, colback=blue!2, colframe=blue!35!gray,
  boxrule=0.55pt, arc=1mm, left=1mm, right=1mm,
  top=0.8mm, bottom=0.8mm, before skip=5pt, after skip=5pt
}
\tcolorboxenvironment{lemma}{
  enhanced, breakable, colback=blue!2, colframe=blue!35!gray,
  boxrule=0.55pt, arc=1mm, left=1mm, right=1mm,
  top=0.8mm, bottom=0.8mm, before skip=5pt, after skip=5pt
}
\tcolorboxenvironment{corollary}{
  enhanced, breakable, colback=blue!2, colframe=blue!35!gray,
  boxrule=0.55pt, arc=1mm, left=1mm, right=1mm,
  top=0.8mm, bottom=0.8mm, before skip=5pt, after skip=5pt
}
\section{Scope and Limitations}
\label{app:scope}

Native closure inherits the scope of the source method's own certificate;
\RRV{} instead certifies only the declared distance to its uniquely specified
retraining reference.

Fixed-readout claims apply to the input representations declared before the
candidate is inspected, and say nothing about inputs outside that set.
Randomized-release calibration assumes a predeclared common scale; adaptive
release policies, in which the noise scale reacts to the candidate, require
separate analysis.

Our reusable proof state currently targets moderate-dimensional mutable heads.
Extending incremental certification to larger trainable blocks will require
exploiting additional structure, since dense inverse evidence does not scale
to that regime. Unlike statistical deletion capacity, proof-state lifetime is
instance-dependent and governed by conditioning rather than a universal
deletion count: a failed guard triggers certified rebuilding, which restores
the invariants at higher cost but imposes no fixed bound on the number of
deletions a service can absorb.

\section{Proofs of Main Results}
\label{app:main-proofs}

\subsection{Release-Time \RRV{} Certificates}

We first establish the parameter-error bound and its error-centered
fixed-readout refinement. The first is the classical residual--inverse-defect
argument from verified linear algebra~\citep{krawczyk1969,oishi2002,rump2010};
we retain its short derivation because it is the main certificate interface.

\begin{proof}[Proof of \cref{thm:joint}]
Let \(\Delta=\theta^{\mathrm R}-\widetilde\theta\). Since
\(H\theta^{\mathrm R}=B\),
\[
\begin{aligned}
\Delta&=(I-PH)\Delta+P(B-H\widetilde\theta),\\
\norm{\Delta}_F
&\leq
\norm{I-PH}_2\norm{\Delta}_F
+\norm{P(B-H\widetilde\theta)}_F
\leq
\delta\norm{\Delta}_F+\rho.
\end{aligned}
\]
Thus \(\norm{\Delta}_F\leq\rho/(1-\delta)=C\). The reverse triangle
inequality gives
\(\norm{\theta^{\mathrm R}}_F\geq\norm{\widetilde\theta}_F-C\), so whenever
\(\norm{\widetilde\theta}_F>C\),
\begin{equation}
\frac{\norm{\widetilde\theta-\theta^{\mathrm R}}_F}
     {\norm{\theta^{\mathrm R}}_F}
\leq
\frac{C}{\norm{\widetilde\theta}_F-C}.
\label{eq:relative-certificate}
\end{equation}
\end{proof}

\begin{proof}[Proof of \cref{prop:fixed-readout-error-center}]
The first bound follows from
\(\norm{Z\Delta}_F\leq\norm{Z}_2\norm{\Delta}_F
\leq\norm{Z}_2C\). For the second, write
\(\Delta=\widehat\Delta+(\Delta-\widehat\Delta)\) and apply the triangle
inequality together with the stated center enclosure. Taking the smaller of
the two valid bounds gives \cref{eq:fixed-readout-bound}.
\end{proof}

The absolute parameter bound also certifies decisions with sufficient margin.
\begin{corollary}[Certified linear-decision agreement]
\label{cor:decision-agreement}
Fix a representation \(z\in\R^d\). For a binary head satisfying
\(\norm{\widetilde\theta-\theta^{\mathrm R}}_2\leq C\), the executed and
retraining-reference signs agree when
\(\abs{z^\top\widetilde\theta}>\norm z_2C\). For a multiclass head, let
\(\norm{\widetilde\Theta-\Theta^{\mathrm R}}_F\leq C\) and
\(a=\operatorname*{arg\,max}_j z^\top\widetilde\theta_j\). It remains the
unique top class
under \(\Theta^{\mathrm R}\) when
\[
z^\top\widetilde\theta_a-z^\top\widetilde\theta_j
>\sqrt{2}\,\norm z_2C
\qquad\text{for every }j\ne a.
\]
\end{corollary}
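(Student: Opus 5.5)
The plan is to reduce both claims to a perturbation bound on linear scores. For any fixed \(z\) and any parameter vectors, Cauchy--Schwarz controls how far a score can move. Agreement then follows whenever the executed margin exceeds that worst-case movement.

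\textbf{Binary case.} Set \(\Delta=\theta^{\mathrm R}-\widetilde\theta\), so \(\norm{\Delta}_2\leq C\). Then
\[
z^\top\theta^{\mathrm R}=z^\top\widetilde\theta+z^\top\Delta,
\qquad
\abs{z^\top\Delta}\leq\norm z_2\norm{\Delta}_2\leq\norm z_2 C .
\]
If \(\abs{z^\top\widetilde\theta}>\norm z_2C\), the perturbation cannot reach zero. Hence \(z^\top\theta^{\mathrm R}\) is nonzero and has the same sign as \(z^\top\widetilde\theta\).

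\textbf{Multiclass case.} Write \(\Delta_j=\theta^{\mathrm R}_j-\widetilde\theta_j\) for the column differences, so \(\sum_j\norm{\Delta_j}_2^2\leq C^2\). For each \(j\ne a\) I would compare reference margins:
\[
z^\top\theta^{\mathrm R}_a-z^\top\theta^{\mathrm R}_j
=\bigl(z^\top\widetilde\theta_a-z^\top\widetilde\theta_j\bigr)+z^\top(\Delta_a-\Delta_j).
\]
The key step is to bound the perturbation term using only the two columns involved. Cauchy--Schwarz and then \((u+v)^2\leq 2(u^2+v^2)\) give
\[
\abs{z^\top(\Delta_a-\Delta_j)}\leq\norm z_2\bigl(\norm{\Delta_a}_2+\norm{\Delta_j}_2\bigr)\leq\norm z_2\sqrt{2\bigl(\norm{\Delta_a}_2^2+\norm{\Delta_j}_2^2\bigr)}\leq\sqrt2\,\norm z_2 C .
\]
Under the stated margin hypothesis, every reference margin is therefore strictly positive. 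So \(a\) is the unique maximizer of \(z^\top\theta^{\mathrm R}_j\).

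The only point requiring care is the \(\sqrt2\) constant: the Frobenius budget must be split across exactly the two columns \(a\) and \(j\), rather than bounding each column separately by \(C\), which would give the weaker constant \(2\).
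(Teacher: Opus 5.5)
Your proposal is correct and follows the paper's proof essentially step for step. You use Cauchy--Schwarz for the binary score perturbation, and for each multiclass margin you bound \(\abs{z^\top(\Delta_a-\Delta_j)}\leq\sqrt2\,\norm z_2C\) using the Frobenius budget on columns \(a\) and \(j\) only; the paper asserts \(\norm{\Delta_a-\Delta_j}_2\leq\sqrt2\,C\) without comment, and your triangle-inequality plus \((u+v)^2\leq 2(u^2+v^2)\) step supplies exactly that omitted justification.
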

\begin{proofbox}[Proof]
Cauchy--Schwarz bounds the binary score change by \(\norm z_2C\).
For the multiclass case, writing
\(\Delta_j=\theta_j^{\mathrm R}-\widetilde\theta_j\) gives
\[
\abs{z^\top(\Delta_a-\Delta_j)}
\leq\norm z_2\norm{\Delta_a-\Delta_j}_2
\leq\sqrt{2}\,\norm z_2 C.
\]
The strict margins preserve the stated decisions.
\end{proofbox}

For differentiable strongly convex objectives, a post-update gradient
enclosure provides a separate construction of the parameter-error bound.

\begin{proposition}[Strongly-convex post-update certificate]
\label{prop:strongly-convex-post-update}
Let \(L:\R^d\to\R\) be differentiable and \(\mu\)-strongly convex, with
minimizer \(\theta^{\mathrm R}\). Suppose the verifier constructs
\[
 g^+\geq\norm{\nabla L(\widetilde\theta)}_2,
 \qquad 0<\mu^-\leq \mu.
\]
Then
\[
 \norm{\widetilde\theta-\theta^{\mathrm R}}_2
 \leq \frac{g^+}{\mu^-}=:C.
\]
\end{proposition}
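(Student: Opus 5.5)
The plan is the standard strong-monotonicity argument for the gradient of a strongly convex function, followed by monotonicity of the bound in the computed constants. Because \(L\) is differentiable and \(\mu\)-strongly convex, its gradient is \(\mu\)-strongly monotone. That is, for all \(u,v\in\R^d\),
\[
 \bigl(\nabla L(u)-\nabla L(v)\bigr)^\top(u-v)\geq\mu\norm{u-v}_2^2 .
\]
To derive this from the definition, I would write the strong-convexity inequality \(L(v)\geq L(u)+\nabla L(u)^\top(v-u)+\tfrac{\mu}{2}\norm{v-u}_2^2\), write it again with \(u\) and \(v\) swapped, and add the two. Strong convexity also makes the minimizer \(\theta^{\mathrm R}\) unique. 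Since \(L\) is differentiable and \(\theta^{\mathrm R}\) is an unconstrained minimizer, the first-order condition gives \(\nabla L(\theta^{\mathrm R})=0\).

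Next I would set \(u=\widetilde\theta\) and \(v=\theta^{\mathrm R}\), and write \(\Delta=\widetilde\theta-\theta^{\mathrm R}\). Using \(\nabla L(\theta^{\mathrm R})=0\) and then Cauchy--Schwarz,
\[
 \mu\norm{\Delta}_2^2\leq\nabla L(\widetilde\theta)^\top\Delta\leq\norm{\nabla L(\widetilde\theta)}_2\norm{\Delta}_2 .
\]
If \(\Delta=0\), the claim holds trivially. Otherwise I divide by \(\norm{\Delta}_2\) to get \(\norm{\Delta}_2\leq\norm{\nabla L(\widetilde\theta)}_2/\mu\).

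Finally, the right-hand side is nondecreasing in the gradient norm and nonincreasing in the constant, and \(\mu^->0\) by hypothesis. Replacing \(\norm{\nabla L(\widetilde\theta)}_2\) by \(g^+\) and \(\mu\) by \(0<\mu^-\leq\mu\) therefore gives \(\norm{\Delta}_2\leq g^+/\mu^-=C\).

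None of these steps is a real obstacle. The only points needing care are stating strong convexity in a form that yields the monotonicity inequality without assuming twice differentiability, and noting that the minimizer exists and is interior, so that \(\nabla L(\theta^{\mathrm R})=0\). Existence follows because a \(\mu\)-strongly convex differentiable function on \(\R^d\) is coercive; in any case the statement posits the minimizer. For matrix-valued heads such as softmax, the same argument applies after vectorizing \(\Theta\), with the Euclidean norm read as the Frobenius norm.
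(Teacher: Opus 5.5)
Your proof is correct and takes the same route as the paper. The paper only cites the standard gradient--distance consequence of strong convexity (Nesterov) and does not write out the strong-monotonicity, first-order-condition and Cauchy--Schwarz steps you supply; your final replacement of \(\norm{\nabla L(\widetilde\theta)}_2\) and \(\mu\) by \(g^+\) and \(\mu^-\) is also handled correctly.
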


The absolute bound is the standard gradient--distance consequence of strong
convexity~\citep{nesterov2004}. A relative contract follows by applying
\cref{eq:relative-certificate}.

\subsection{Randomized Release Interfaces}
\label{app:gaussian-release-interface}
\label{app:randomized-release-interfaces}

These fixed-state interfaces use a predeclared common noise scale. If \RRV{}
certifies
\(\norm{\widetilde\theta-\theta^{\mathrm R}}_2\leq C\), the
equal-covariance Gaussian testing identity of \citet{pandey2026} gives
\[
\varepsilon_*=
\frac{\norm{\widetilde\theta-\theta^{\mathrm R}}_2}{\sigma}
\leq\frac{C}{\sigma}.
\]
For isotropic radial Laplace releases with common predeclared scale \(s\), we
use the density \(q_\mu(z)\propto\exp(-\norm{z-\mu}_2/s)\). Sampling draws a
uniform direction on the Euclidean unit sphere and an independent radius from
\(\operatorname{Gamma}(d,s)\); the same rule and scale apply to both centers.
The triangle inequality gives,
for every measurable event \(A\),
\[
Q_{\rm exec}(A)\leq e^{C/s}Q_{\rm ref}(A),
\qquad
Q_{\rm ref}(A)\leq e^{C/s}Q_{\rm exec}(A).
\]
Thus \(\sigma\geq C/\varepsilon\) gives the corresponding Gaussian trade-off
guarantee, while \(s\geq C/\varepsilon\) gives a two-sided Laplace comparison.
Here \(\varepsilon_*\) is the equal-covariance Gaussian testing parameter; it
is not by itself a conventional \((\varepsilon,\delta)\)-DP guarantee.
If an upstream analysis supplies radius \(r_{\rm nom}\) and executable closure
adds \(C_{\rm exec}\), the Laplace result applies with
\(C=r_{\rm nom}+C_{\rm exec}\).

\subsection{Nonsymmetric Defect Recurrence}
\label{app:nonsymmetric-recurrence}

All quantities in this subsection are evaluated in exact real arithmetic.
Let \(H_s,P\in\R^{d\times d}\), \(x\in\R^d\), and
\(s\in\{-1,+1\}\). Define
\[
\begin{aligned}
 E&=I_d-PH_s, & q&=Px,\\
 z&=P^\top x, & \alpha&=1+s x^\top q.
\end{aligned}
\]
For \(\alpha\neq0\), let
\[
 H_s^+=H_s+sxx^\top,
 \qquad
 P^+=P-\frac{s}{\alpha}qz^\top.
\]
The update for \(P^+\) is the Sherman--Morrison formula~\citep{hager1989}.
The following identity propagates the stored-system inverse defect without
assuming that \(P\) is symmetric. \Cref{app:sequential-inverse} constructs its
guarded floating-point realization.

\begin{lemma}[Exact nonsymmetric defect recurrence]
\label{lem:defect-recurrence}
For arbitrary \(H_s,P\in\R^{d\times d}\) with \(\alpha\neq0\), including
nonsymmetric \(P\), the post-update defect
\(E^+=I_d-P^+H_s^+\) satisfies
\[
 E^+
 =
 E-q\frac{s}{\alpha}
 \left(x^\top-z^\top H_s\right).
\]
The recurrence requires \(O(d^2)\) arithmetic and \(O(d^2)\) state.
\end{lemma}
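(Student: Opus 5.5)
We prove the identity by direct expansion in exact arithmetic. We substitute the definitions of \(P^+\) and \(H_s^+\) into \(E^+=I_d-P^+H_s^+\) and regroup the result into \(E\) plus a rank-one correction.

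\textbf{Step 1 (expansion).} With \(P^+=P-\tfrac{s}{\alpha}qz^\top\) and \(H_s^+=H_s+sxx^\top\), we have
\[
P^+H_s^+ = PH_s + s\,Pxx^\top - \tfrac{s}{\alpha}qz^\top H_s - \tfrac{s^2}{\alpha}\,q\,(z^\top x)\,x^\top .
\]
Using \(Px=q\), this gives \(E^+ = E - s\,qx^\top + \tfrac{s}{\alpha}qz^\top H_s + \tfrac{s^2}{\alpha}(z^\top x)\,qx^\top\).

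\textbf{Step 2 (the scalar \(z^\top x\)).} This is the one point where care is needed, because \(P\) is not assumed symmetric. We note that \(z^\top x = (P^\top x)^\top x = x^\top P x = x^\top q\). Hence \(s\,z^\top x=\alpha-1\), with no symmetry required. Since \(s^2=1\), the last term in Step 1 equals \(\tfrac{s}{\alpha}(\alpha-1)\,qx^\top = s\,qx^\top-\tfrac{s}{\alpha}qx^\top\).

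\textbf{Step 3 (collection).} The \(\pm s\,qx^\top\) terms cancel, and we are left with
\[
E^+ = E - \tfrac{s}{\alpha}qx^\top + \tfrac{s}{\alpha}qz^\top H_s = E - q\tfrac{s}{\alpha}\bigl(x^\top - z^\top H_s\bigr),
\]
which is the claimed recurrence.

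\textbf{Cost.} The quantities \(q=Px\), \(z=P^\top x\) and the row vector \(z^\top H_s\) are matrix--vector products, each costing \(O(d^2)\). Forming \(\alpha\) costs \(O(d)\). The final rank-one update of \(E\) costs \(O(d^2)\). The state consists of \(P\), \(H_s\) and \(E\) (or its tracker), so \(O(d^2)\) storage suffices.

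The only step that could go wrong is the non-symmetric bookkeeping in Step 2: \(z^\top x\) must be identified with \(x^\top q\) rather than with some quantity involving \(P^\top\). Once that is done, everything else is routine algebra.
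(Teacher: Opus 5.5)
Your proof is correct and follows essentially the same route as the paper: direct expansion of \(I_d-P^+H_s^+\), the identification \(z^\top x=x^\top Px=x^\top q\) (which is where the nonsymmetry of \(P\) is handled), and reduction of the \(qx^\top\) coefficient to \(-s/\alpha\) using \(\alpha=1+s\beta\). Your splitting of the last term into \(s\,qx^\top-\tfrac{s}{\alpha}qx^\top\) is only a rearrangement of the paper's coefficient computation \(-s+\beta/\alpha=-s/\alpha\).
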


\begin{proof}[Proof of \cref{lem:defect-recurrence}]
Because \(z^\top=x^\top P\), expansion of
\(E^+=I_d-P^+H_s^+\) gives
\[
 E^+=E-sqx^\top+\frac{s}{\alpha}qz^\top H_s
      +\frac{1}{\alpha}q(z^\top x)x^\top .
\]
Let \(\beta=z^\top x=x^\top q\).  Since \(\alpha=1+s\beta\), the
coefficient of \(qx^\top\) equals
\[
 -s+\frac{\beta}{\alpha}
 =-\frac{s}{\alpha}.
\]
Substitution yields
\(E^+=E-(s/\alpha)q(x^\top-z^\top H_s)\).
\end{proof}

\subsection{Transactional Conformance and Cost}
\label{app:transactional-safety-proof}

The transactional theorem applies to any sound verification backend. Its
incremental corollary combines the maintained statistics and inverse-defect
invariants with the release-time contract checks.

Here guarded committed transitions require every numerical denominator and
interval to pass its guard, every certificate evaluation to remain within the
certified arithmetic scope, and every rank-one subupdate and invariant check
to complete before commit. At a release checkpoint, the service exposes an
artifact only after every declared contract certifies.

\paragraph{Transactional justification.}
The initialization hypotheses give the two proof-state invariants at the first
state. \Cref{thm:statistics-error-bound,thm:radius} preserve them under every
accepted guarded transition. At a release checkpoint, the transfer and release
constructions in \cref{app:joint_bounds,app:release-certificates} therefore
establish every declared contract for the concrete candidate presented to the
policy gate. If remediation changes that candidate, the original certificate
no longer authorizes release and the new candidate must be certified
separately. Atomic release ensures that the visible artifact is exactly the
candidate covered by the successful certificates.

\paragraph{Cost accounting.}
Let \(G_t\) denote the cost of advancing the selected output generator at
request \(t\), let \(k_t\) be the number of rank-one subupdates for that
request, and set \(m=0\) when no fixed readout is declared. Each request adds
\(O(k_td^2+k_tdc)\) proof-state work; a release adds
\(O(d^2c+mdc)\) certificate work. Rebuilding proof state from certified
\((\widetilde H_t,\widetilde B_t)\) costs \(O(d^3+d^2c)\). Reconstructing those
statistics from retained rows adds \(O(n_td^2+n_tdc)\).
Let \(M_{\rm proof}\) count dense proof-state rebuilds that reuse certified
sufficient statistics, and let \(\mathcal R_{\rm data}\) be the index set of
requests whose rebuilds reconstruct those statistics from \(n_t\) retained
rows. Over \(N\) requests,
the total cost is
\[
 \begin{aligned}
 O\!\biggl(&\sum_{t=1}^N G_t
 +\sum_{t=1}^N(k_td^2+k_tdc)\\
 &+\abs{\mathcal R}(d^2c+mdc)
 +M_{\rm proof}(d^3+d^2c)\\
 &+\sum_{t\in\mathcal R_{\rm data}}
 (n_td^2+n_tdc+d^3+d^2c)\biggr).
 \end{aligned}
\]
The every-request-release policy is the special case
\(\abs{\mathcal R}=N\).

\subsection{Finite-Format Impossibility}
\label{app:format-impossibility-proof}

Let \(\theta^{\mathrm R}\in\R^{d\times c}\) be the exact retraining target,
and let \(\widehat\theta\) satisfy
\(\norm{\widehat\theta-\theta^{\mathrm R}}_F\leq r_{\rm ref}\). For a finite
scalar format \(\mathcal F\subset\R\), let
\(\underline d_{\mathcal F}(\widehat\theta)\) be a certified lower bound on
the distance from \(\widehat\theta\) to \(\mathcal F^{d\times c}\), and let
\(\overline s_{\rm ref}\) be a certified upper bound on
\(\norm{\widehat\theta}_F\).

\begin{proof}[Proof of \cref{prop:format-impossibility}]
For any representable
\(\theta_{\mathcal F}\in\mathcal F^{d\times c}\), the reverse triangle
inequality and the certified reference enclosure give
\[
\begin{aligned}
\norm{\theta_{\mathcal F}-\theta^{\mathrm R}}_F
&\geq \norm{\theta_{\mathcal F}-\widehat\theta}_F
      -\norm{\widehat\theta-\theta^{\mathrm R}}_F\\
&\geq
\underline d_{\mathcal F}(\widehat\theta)-r_{\rm ref}.
\end{aligned}
\]
The ordinary triangle inequality gives
\[
\norm{\theta^{\mathrm R}}_F
\leq \overline s_{\rm ref}+r_{\rm ref}.
\]
Under \cref{eq:format-impossibility}, the lower bound on the numerator is
strictly larger than \(\tau_{\rm par}\) times this upper bound on the denominator.
Thus every representable tensor violates the declared relative contract.
\end{proof}

\section{Soundness of \IncrementalRRV{}}
\label{app:domains_proofs}

This appendix gives the executable verifier specification needed to establish
the soundness of the incremental ridge regression backend of \RRV{}.
\Cref{alg:rrv-full} gives its transactional release protocol.
\begin{algorithm}[H]
  \caption{\IncrementalRRV{} release protocol}
  \label{alg:rrv-full}
  \footnotesize
  \begin{algorithmic}[1]
    \Require Committed service and proof state, deletion request \(t\), release
      schedule \(\mathcal R\), output generator, and one or more declared
      contracts: parameter fidelity, fixed-readout fidelity, or both
    \State Provisionally update the statistics, their retain-set enclosure,
      and the inverse-defect proof state.
    \If{a guard or proof-state invariant check fails}
      \State Apply the configured proof-state rebuild.
      \If{the rebuild does not reestablish the required invariant}
        \State Discard the transition; \Return internal \textsc{State-Fail}.
      \EndIf
    \EndIf
    \State Advance the output-generator state for request \(t\).
    \If{\(t\notin\mathcal R\)}
      \State Atomically commit the verified internal state; emit no output.
      \State \Return \textsc{NoRelease}
    \EndIf
    \State Materialize the current executed candidate artifact
      \(\widetilde\theta_t\).
    \State Construct \((\delta,\rho,C)\) and every declared contract certificate.
    \If{some declared contract is inconclusive}
      \State Withhold the candidate and apply the configured remediation.
      \State Reconstruct proof state as needed and recertify any changed candidate.
    \EndIf
    \If{every declared contract now certifies}
      \State Atomically commit state and certificates; release the candidate.
      \State \Return \textsc{Certified}
    \Else
      \State Atomically commit verified internal state; withhold the candidate.
      \State \Return \textsc{Inconclusive}
    \EndIf
  \end{algorithmic}
\end{algorithm}

The verifier maintains statistics and inverse-defect enclosures, transfers
them to the exact retain-set system, and constructs the release certificate in
\cref{app:statistics-error-invariant,app:sequential-inverse,app:joint_bounds,app:release-certificates}.

\subsection{Floating-point conventions and notation map}
\label{app:fp-conventions}

\paragraph{Notation and exact-real interpretation.}
We use the notation and shapes of \cref{sec:proof-state}; array inequalities
and absolute values remain componentwise. A hat denotes a stored computed
value, and a circle superscript denotes exact-real evaluation of computed
intermediates. The symbol \(\mathcal U\) denotes a componentwise enclosure,
\(\varepsilon\) a local arithmetic bound, and \((\delta,\rho)\) the hypotheses
consumed by \cref{thm:joint}. Stored entries are interpreted as their exact
real values.

\paragraph{Standard arithmetic facts.}
We use the componentwise floating-point models for basic arithmetic, dot
products, matrix--vector products, and matrix products from
\citet[Ch.~3]{higham2002}. Exact residuals returned by \textsc{TwoSum} use the
error-free transformation of \citet{ogita2005}. We invoke these standard facts
without rederiving them below; every displayed endpoint is evaluated outward,
and a failed arithmetic-scope guard returns \textsc{Inconclusive}.

\paragraph{Storage and verifier precision.}
The stored sufficient statistics may use
\(\mathbb F_{\rm s}\in\{\mathbb F_{32},\mathbb F_{64}\}\), with unit roundoff
\(u_{\rm s}\). The verifier and all outward endpoints use binary64,
\(\mathbb F_{\rm v}=\mathbb F_{64}\), with \(u_{\rm v}=2^{-53}\). Stored FP32
statistics are promoted exactly to binary64 before verification; all other
maintained proof state uses binary64.

\paragraph{Outward evaluation.}
Define
\(\gamma_m^{(q)}=m u_q/(1-m u_q)\) for \(m u_q<1\) and
\(\bar u_q=u_q/(1-u_q)\), where \(q\in\{\mathrm{s},\mathrm{v}\}\).
For verifier arithmetic, we write
\(\gamma_m:=\gamma_m^{(\mathrm v)}\) and \(\bar u:=\bar u_{\mathrm v}\).
Let \(\eta=2^{-1074}\), and let \(\omega\) be the largest finite binary64
number.  For \(z\in\mathbb F_{64,\geq0}\), define
\[
 \operatorname{up}_m(z):=
 \operatorname{nextafter}\!\left(
 \frac{z}{1-\gamma_m^{(\mathrm v)}},+\infty\right).
\]
The shorthand \(\operatorname{up}(\cdot)\) suppresses only this arity
subscript: the implementation passes it explicitly, and each displayed use
has the unique arity prescribed by the immediately enclosed operation.
\(\operatorname{down}(z)=
\operatorname{nextafter}(\fl(z),-\infty)\) for its lower endpoint.
\(m=\ell\) for a sum of \(\ell\) nonnegative terms, \(m=1\) for one
product or division, and \(m=2p+2\) for a scale-safe Frobenius-norm
reduction over \(p\) entries.

\paragraph{Arithmetic scope.}
The dot-product bounds permit any parenthesization of binary32 or binary64
products and sums within the stated operation count, so the proof
does not depend on a particular threaded-BLAS reduction order. Fused
multiply--add is covered conservatively because it incurs no more rounding
than the separate product and addition modeled by the bound. Algebraic
reassociation through \texttt{fastmath} is disabled. TF32 and kernels with
mixed-precision accumulation are outside the certified arithmetic model.
Certified runs require IEEE gradual underflow; overflow, nonfinite values,
unsafe subnormal intermediates, or an endpoint beyond \(\omega\) returns
\textsc{Inconclusive}. Relative-error bounds without an explicit \(\eta\)
term are used only after their intermediates are proved to remain in the
normal range.
These binary64 conventions govern the ridge inverse realization.

\subsection{Retain-Set Statistics Proof State}
\label{app:statistics-error-invariant}

\paragraph{Initialization.}

At initialization, let
\(\widehat A_H=\fl_{\rm s}(\abs{X_0}^{\top}\abs{X_0})\) and
\(\widehat A_B=\fl_{\rm s}(\abs{X_0}^{\top}\abs{Y_0})\).  Reverse
dot-product bounds instantiate the enclosure below. The declared ridge is
interpreted as the exact real value of its stored encoding; the displayed
rounding term covers its addition to the stored diagonal.
\begin{equation}
\begin{aligned}
\mathcal U_H^{(0)}
&=\operatorname{up}\!\left(
\frac{\gamma_{n_0}^{(\mathrm s)}}{1-\gamma_{n_0}^{(\mathrm s)}}
\widehat A_H+\bar u_{\rm s}\abs{\widetilde H_0}\right)
\in\R_{\geq0}^{d\times d},\\
\mathcal U_B^{(0)}
&=\operatorname{up}\!\left(
\frac{\gamma_{n_0}^{(\mathrm s)}}{1-\gamma_{n_0}^{(\mathrm s)}}
\widehat A_B\right)
\in\R_{\geq0}^{d\times c},
\end{aligned}
\label{eq:statistics-initialization}
\end{equation}

\paragraph{Shared initialization across precisions.}
When two storage precisions share one binary64 initialization, the
lower-precision path instead uses the direct cast enclosure
\[
 \mathcal U_{H,{\rm s}}^{(0)}
 =\operatorname{up}\!\left(
   \mathcal U_{H,{\rm v}}^{(0)}
   +\abs{\operatorname{cast}_{\rm s}(\widetilde H_{0,{\rm v}})
          -\widetilde H_{0,{\rm v}}}\right),
\]
and analogously for \(B\).  This preserves a common exact data target while
isolating the precision used to maintain the stored statistics. We use this
shared initialization only for the matched FP32/FP64 diagnostic in
\cref{app:sequential-proof-state-behavior}, so the two arms differ in
subsequent storage precision rather than in their exact data target.

\paragraph{Signed rank-one updates.}
For an update sign \(s\in\{-1,+1\}\), let
\(\widehat G_H:=\fl_{\rm s}(xx^\top)\in\mathbb F_{\rm s}^{d\times d}\)
and
\(\widehat G_B:=\fl_{\rm s}(xy^\top)\in\mathbb F_{\rm s}^{d\times c}\).
The row entries \(x,y\) are values in \(\mathbb F_{\rm s}\), interpreted
exactly as reals.  The implementation sets
\[
\widetilde H^+:=\fl_{\rm s}(\widetilde H+s\widehat G_H),\qquad
\widetilde B^+:=\fl_{\rm s}(\widetilde B+s\widehat G_B)
\]
and updates
\begin{align}
\mathcal U_H^+
&:=\operatorname{up}\!\left(
\mathcal U_H+\bar u_{\rm s}\abs{\widehat G_H}
+\bar u_{\rm s}\abs{\widetilde H^+}\right),
\label{eq:statistics_H_bound_update}\\
\mathcal U_B^+
&:=\operatorname{up}\!\left(
\mathcal U_B+\bar u_{\rm s}\abs{\widehat G_B}
+\bar u_{\rm s}\abs{\widetilde B^+}\right).
\label{eq:statistics_B_bound_update}
\end{align}

Each update adds the local outer-product and storage-rounding errors to the
existing statistics enclosure. The following theorem establishes that this
enclosure remains valid after every committed update.

\begin{theorem}[Statistics-error-bound soundness]
\label{thm:statistics-error-bound}
Suppose the current state satisfies
\(\abs{\widetilde H-H}\leq\mathcal U_H\) and
\(\abs{\widetilde B-B}\leq\mathcal U_B\).
For \(s\in\{-1,+1\}\), let
\(H^+=H+sxx^\top\) and \(B^+=B+sxy^\top\).
Whenever the certified arithmetic-scope guards pass, the updates in
\cref{eq:statistics_H_bound_update,eq:statistics_B_bound_update} satisfy
\[
\abs{\widetilde H^+-H^+}\leq\mathcal U_H^+,\qquad
\abs{\widetilde B^+-B^+}\leq\mathcal U_B^+.
\]
The initialization in \cref{eq:statistics-initialization} also encloses the
exact statistics represented by \(X_0,Y_0\) and the declared ridge encoding.
A failed guard makes the transition noncommittable.
\end{theorem}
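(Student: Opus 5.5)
The argument is a componentwise triangle inequality applied to each rank-one update, plus a separate reverse dot-product argument for the initialization. The \(B\) enclosure follows verbatim from the \(H\) argument with \((\widehat G_B,\widetilde B,\mathcal U_B)\) in place of \((\widehat G_H,\widetilde H,\mathcal U_H)\), so I treat only \(H\) and then initialization.

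\textbf{Update step.} Decompose the post-update error as
\[
\widetilde H^+-H^+
=(\widetilde H^+-(\widetilde H+s\widehat G_H))
+s(\widehat G_H-xx^\top)
+(\widetilde H-H).
\]
The three pieces are bounded as follows.
\begin{itemize}[leftmargin=*]
\item The last piece is at most \(\mathcal U_H\) by hypothesis.
\item For the middle piece, each entry of \(\widehat G_H\) is a single rounded product \(\fl_{\rm s}(x_ix_j)\). The guards exclude underflow and subnormal products, so the standard model in the form \(\abs{\fl(a)-a}\leq \bar u_{\rm s}\abs{\fl(a)}\) (i.e.\ \(a=\fl(a)(1+\epsilon)\) with \(\abs\epsilon\le\bar u\)) gives \(\abs{\widehat G_H-xx^\top}\leq\bar u_{\rm s}\abs{\widehat G_H}\).
\item For the first piece, \(\widetilde H^+\) is one rounded addition of stored numbers, so the same model bounds it by \(\bar u_{\rm s}\abs{\widetilde H^+}\).
\end{itemize}
Summing these gives the exact-real bound \(\mathcal U_H+\bar u_{\rm s}\abs{\widehat G_H}+\bar u_{\rm s}\abs{\widetilde H^+}\). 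The remaining point is that the update formula evaluates this sum in binary64 with outward rounding. Applying \(\operatorname{up}_3\) to the floating-point sum of three nonnegative terms (and upward-rounded products \(\bar u_{\rm s}\cdot\abs{\cdot}\)) dominates the exact real sum, since each rounded sum of \(\ell\) nonnegative terms is at least \((1-\gamma_\ell)\) times the true sum. Dividing by \(1-\gamma_\ell\) and stepping up one ulp therefore recovers an upper bound, with the extra ulp also absorbing the rounding of the division.

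\textbf{Initialization.} Each stored entry \((\widetilde H_0)_{ij}\) equals \(\fl_{\rm s}(\fl_{\rm s}(X_0^\top X_0)_{ij}+\lambda\delta_{ij})\). The Higham dot-product bound for any summation order gives
\[
\abs{\fl(X_0^\top X_0)-X_0^\top X_0}\leq\gamma_{n_0}\abs{X_0}^\top\abs{X_0}.
\]
This must be expressed through the computed \(\widehat A_H\) rather than the exact \(\abs{X_0}^\top\abs{X_0}\), since only the former is available. All terms in \(\abs{X_0}^\top\abs{X_0}\) are nonnegative, so \(\widehat A_H\geq(1-\gamma_{n_0})\abs{X_0}^\top\abs{X_0}\). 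This is the reverse dot-product bound, and it yields the factor \(\gamma_{n_0}/(1-\gamma_{n_0})\). The diagonal ridge addition contributes \(\bar u_{\rm s}\abs{\widetilde H_0}\) by the same one-operation model as in the update step, with \(\lambda\) taken as its exact encoded value. The \(B\) initialization has no ridge term.

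\textbf{Guards and main obstacle.} If a guard fails, none of the models above are invoked, and the protocol discards the transition, so it is noncommittable. The main care point is the precision bookkeeping: the rounding models use \(u_{\rm s}\), while the enclosure arithmetic uses \(u_{\rm v}\). I would verify that each \(\operatorname{up}\) arity matches the number of operands it encloses, that binary64 promotion of FP32 values is exact, and that the FP32 relative bounds are applied only after the normal-range guard has passed.
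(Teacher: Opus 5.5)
Your proposal is correct and follows the paper's proof. It uses the same componentwise triangle inequality that inserts the exact outer product, bounds the outer-product and storage roundings by \(\bar u_{\rm s}\abs{\widehat G_H}\) and \(\bar u_{\rm s}\abs{\widetilde H^+}\) via the standard model, and derives the initialization radii from the dot-product bound together with the reverse bound \(\widehat A_H\geq(1-\gamma_{n_0})\abs{X_0}^\top\abs{X_0}\). Your extra bookkeeping on the outward \(\operatorname{up}\) evaluation and the normal-range guards fills in details that the paper leaves to its standard floating-point conventions.
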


\begin{proofbox}[Proof of statistics-error-bound soundness]
For \(H\), inserting the exact outer product gives
\[
\begin{aligned}
\abs{\widetilde H^+-H^+}
&\leq\abs{\widetilde H-H}
 +\abs{\widehat G_H-xx^\top}\\
&\quad+\abs{\widetilde H^+
-(\widetilde H+s\widehat G_H)}.
\end{aligned}
\]
The standard componentwise models of \citet[Ch.~3]{higham2002} bound the last
two terms by
\(\bar u_{\rm s}\abs{\widehat G_H}\) and
\(\bar u_{\rm s}\abs{\widetilde H^+}\), respectively. Combining them with
the previous enclosure proves \cref{eq:statistics_H_bound_update}; the same
argument applies to \(B\). The standard dot-product model gives the
initialization radii, and induction establishes
\cref{eq:stored-retained-budgets} after every committed update.
\end{proofbox}

\subsection{Exact-Dyadic Retain-Set-Statistics Rebuild}
\label{app:exact_rebuild}

A retained-data rebuild constructs statistics enclosures independently of
accumulated update errors. For each statistic entry, integer accumulation
computes the exact dyadic sum \(\xi\). The procedure stores
\(\widetilde\xi=\operatorname{RN}_{64}(\xi)\) and rounds
\(\abs{\widetilde\xi-\xi}\) outward to obtain the enclosure radius. Any
nonfinite conversion returns \textsc{Inconclusive}. Binary floating-point
products are dyadic, so integer accumulation is exact before the single final
rounding. The resulting entrywise radii therefore satisfy the initialization
hypotheses of \cref{thm:statistics-error-bound}; successful inverse-state
initialization and recertification then permit release under
\cref{thm:policy}.

We distinguish three event types in the implementation records. An
\emph{evidence rebuild} replaces only inverse or factor proof state. A
\emph{verifier-statistics reconstruction} replaces the verifier's retained
statistics and their enclosures. Neither event changes the candidate, and the
producer does not read the rebuilt arrays. A \emph{producer remediation}
instead changes producer state or recomputes the output; it creates a new
candidate with a new hash and may alter subsequent outputs, so it is recorded
and certified separately. Same-output comparisons include only the first two
event types.

Reconstruction visits \(O(n_td^2+n_tdc)\) dot-product terms; subsequent
proof-state initialization and certification have the costs in
\cref{app:transactional-safety-proof}.

\subsection{Stored-to-Retain-Set Transfer}
\label{app:joint_bounds}

The retain-set statistics enclosure from
\cref{app:statistics-error-invariant} and the propagated defect enclosure from
\cref{app:sequential-inverse} provide the executable \((\delta,\rho)\)
required by \cref{thm:joint}.  Let
\(\widehat R=\fl(\widetilde B-\widetilde H\widetilde\theta)\) and define
\begin{equation}
\mathcal U_R
:=\operatorname{up}\!\left(
  \gamma_d\abs{\widetilde H}\abs{\widetilde\theta}
  +\bar u\abs{\widehat R}+\mathcal U_B+
  \mathcal U_H\abs{\widetilde\theta}\right).
\label{eq:retained-residual-envelope}
\end{equation}
The first two terms bound floating-point evaluation of the stored residual.
The last two account for the difference between the stored statistics and the
exact retain-set statistics.
The envelope in \cref{eq:retained-residual-envelope}, the standard
matrix-product and subtraction models, and
\cref{eq:stored-retained-budgets,eq:reference-transfer-identity} give
\[
\abs{B-H\widetilde\theta-\widehat R}\leq\mathcal U_R .
\]
With
\(\delta_{\rm stored}=\operatorname{up}(\norm{\widehat E}_F+r)\), write
\(\overline{\norm{A}_2}\) for the quadratic-cost upper bound
\[
 \operatorname{up}\!\left(
 \sqrt{\operatorname{up}(\norm A_1)
              \operatorname{up}(\norm A_\infty)}\right).
\]
We next combine this residual enclosure with the stored inverse-defect bound
to obtain the two hypotheses required by \cref{thm:joint}:
\begin{equation}
\begin{aligned}
 \delta_{\rm ret}
 &=\operatorname{up}\!\left(
   \delta_{\rm stored}
   +\overline{\norm P_2}\,\overline{\norm{\mathcal U_H}_2}\right),
 \\
 \rho_{\rm ret}
 &=\operatorname{up}\!\left(
   \norm{\fl(P\widehat R)}_F\right.\\[-2pt]
 &\qquad\left.{}+
   \norm{\operatorname{up}(\gamma_d\abs P\abs{\widehat R})}_F\right.\\[-2pt]
 &\qquad\left.{}+
   \norm{\operatorname{up}(\abs P\mathcal U_R)}_F\right).
\end{aligned}
\label{eq:transferred-joint-bounds}
\end{equation}
\begin{proofbox}[Proof of the executable joint bounds]
The matrix-product model and the residual budget give
\[
\begin{aligned}
\norm{P(B-H\widetilde\theta)}_F
&\leq \norm{P\widehat R}_F+
       \norm{\abs{P}\mathcal U_R}_F\\
&\leq \norm{\fl(P\widehat R)}_F+
       \norm{\operatorname{up}(
         \gamma_d\abs{P}\abs{\widehat R})}_F\\
&\quad+\norm{\operatorname{up}(
         \abs{P}\mathcal U_R)}_F
\leq \rho_{\rm ret},
\end{aligned}
\]
which proves the residual hypothesis in \cref{eq:delta-rho-contract}.

For the defect, suppose \(r\in\R_{\geq0}\) satisfies
\(\norm{E-\widehat E}_F\leq r\).  Because
\(I_d-PH=(I_d-P\widetilde H)+P(\widetilde H-H)\),
\[
\begin{aligned}
\norm{I_d-PH}_2
&\leq \norm{\widehat E}_F+r+
\norm{P}_2\norm{\widetilde H-H}_2\\
&\leq \norm{\widehat E}_F+r
 +\overline{\norm P_2}\,\overline{\norm{\mathcal U_H}_2}
\leq\delta_{\rm ret}.
\end{aligned}
\]
The second line follows from
\(\norm A_2\leq\sqrt{\norm A_1\norm A_\infty}\) and
\(\abs{\widetilde H-H}\leq\mathcal U_H\), which implies
\(\norm{\widetilde H-H}_1\leq\norm{\mathcal U_H}_1\) and likewise for
the infinity norm. The final inequalities are the two bounds in
\cref{eq:transferred-joint-bounds}, which supply both hypotheses of
\cref{thm:joint}.
The row and column sums used by the two induced-norm bounds cost \(O(d^2)\); the
remaining residual products cost \(O(d^2c)\).  No dense matrix--matrix
product is evaluated on this release path.
\end{proofbox}

Retaining componentwise arrays preserves the directional residual interaction
\(\abs P\mathcal U_R\), while the defect transfer avoids forming the dense
matrix product \(\abs P\mathcal U_H\).

\subsection{Executable Release Certificates}
\label{app:release-certificates}

\subsubsection{Parameter Certificate}

Substituting the transferred bounds into \cref{thm:joint} gives the absolute
parameter certificate, provided the bounds are finite and
\(\delta_{\rm ret}<1\). For the relative contract, let \(p=dc\), compute the
scale-safe floating-point norm \(\widehat s=\fl(\norm{\widetilde\theta}_F)\),
and form the certified lower endpoint
\[
 s^-:=\operatorname{down}\!\left(
       \frac{\widehat s}{1+\gamma_{2p+2}}
      \right),
 \qquad
 d^-:=\operatorname{down}(s^--C).
\]
The verifier requires \(d^->0\), then computes
\(\tau_{\rm rel}^+:=\operatorname{up}(C/d^-)\) and certifies only when
\(\tau_{\rm rel}^+\leq\tau_{\rm par}\). Thus the norm lower bound, denominator
subtraction, and final division are all directed outward.

\subsubsection{Fixed-Readout Error Center}
\label{app:fixed-readout-executable}

Let \(\Delta=\theta^{\mathrm R}-\widetilde\theta\). The corrected residual
provides an approximate center for this parameter error. Applying the fixed
readout to the center preserves information about the error direction before
the remaining uncertainty is bounded. Define
\[
\begin{aligned}
 \widehat\Delta&=\fl(P\widehat R),\\
 \mathcal U_c^{\rm fl}
  &=\operatorname{up}(\gamma_d\abs P\abs{\widehat R}),\\
 \mathcal U_c^{\rm tr}
  &=\operatorname{up}(\abs P\mathcal U_R).
\end{aligned}
\]
Their sum encloses
\(\widehat\Delta-P(B-H\widetilde\theta)\), so the executable
center-evaluation radius is
\[
\begin{aligned}
 \eta_c&=\operatorname{up}\!\left(
   \norm{\mathcal U_c^{\rm fl}}_F+\norm{\mathcal U_c^{\rm tr}}_F\right),\\
 R_\Delta&=\operatorname{up}(\eta_c+\delta_{\rm ret}C).
\end{aligned}
\]
The inverse-defect bound gives
\(\norm{\Delta-P(B-H\widetilde\theta)}_F\leq\delta_{\rm ret}C\);
therefore \(\norm{\Delta-\widehat\Delta}_F\leq R_\Delta\), as required by
\cref{prop:fixed-readout-error-center}.
For a fixed readout \(Z\), compute
\(\widehat y=\fl(Z\widehat\Delta)\) and
\[
 \mathcal U_y=\operatorname{up}(\gamma_d\abs Z\abs{\widehat\Delta}).
\]
The two sound readout bounds used by the implementation are
\begin{align*}
 C_Z^{\rm ball}
  &=\operatorname{up}(\overline{\norm Z_2}\,C),\\
 C_Z^{\rm ctr}
  &=\operatorname{up}\!\left(
    \norm{\widehat y}_F+\norm{\mathcal U_y}_F
    +\overline{\norm Z_2}\,R_\Delta\right).
\end{align*}
The fixed readout is preprocessed once with the rigorous bound
\[
 \overline{\norm Z_2}
 =\min\!\left\{
   \operatorname{up}(\norm Z_F),
   \operatorname{up}\!\left(
      \sqrt{u_1u_\infty}
   \right)
 \right\}.
\]
Here \(u_1=\operatorname{up}(\norm{Z}_1)\) and
\(u_\infty=\operatorname{up}(\norm{Z}_\infty)\).
Both terms upper-bound \(\norm Z_2\), so their minimum remains rigorous.
The verifier returns
\(\min\{C_Z^{\rm ball},C_Z^{\rm ctr}\}/\sqrt{mc}\) for the normalized
contract in \cref{eq:fixed-readout-contract}. Every displayed operation is
evaluated outward, and the range and finiteness guards from
\cref{app:fp-conventions} apply before the minimum is used.
For \(m\) fixed readout rows, the added work is \(O(mdc)\); the center uses
\(O(dc)\) state, while storing \(Z\) requires \(O(md)\) entries unless the
service already retains the readout.

\subsection{Sequential Inverse Proof State}
\label{app:sequential-inverse}

\subsubsection{State invariant and strict initialization}
\label[appendix]{app:tracker_invariant}

The maintained inverse state consists of \(P\), a computed defect tracker
\(\widehat E\), and a certified radius \(r\) satisfying
\[
 \norm{I_d-P\widetilde H-\widehat E}_F\leq r.
\]
Strict initialization evaluates the defect directly and encloses its
arithmetic error.

Throughout this subsection, \(\widetilde H\) denotes the exact-real embedding
of the entries of the stored floating-point array, whether that array is held
in binary32 or binary64. Write \(H\) for the exact retain-set matrix. Thus the tracked
defect is \(E=I_d-P\widetilde H\), while the statistics-error bound later
transfers this bound to
\(I_d-PH\).

At strict initialization, let
\(\widehat E_0:=\fl(I_d-P_0\widetilde H_0)
\in\mathbb F_{64}^{d\times d}\), where
\(P_0,\widetilde H_0\in\mathbb F_{64}^{d\times d}\).  The standard product and
subtraction models give
\[
r_0:=\operatorname{up}\!\left(
\norm{\operatorname{up}(
\gamma_d\abs{P_0}\abs{\widetilde H_0}
+\bar u\abs{\widehat E_0})}_F
\right)\in\R_{\geq0}.
\]
It satisfies
\(\norm{I_d-P_0\widetilde H_0-\widehat E_0}_F\leq r_0\), establishing the base case
before any rank-one transition is attempted.

Each accepted transition adds three local error budgets to the previous
radius:
\[
 r^+=\operatorname{up}\!\left(
 r+\eta_{\mathrm{alg}}+\eta_{\mathrm{state}}
 +\eta_{\mathrm{fl}}\right).
\]
Here \(\eta_{\rm alg}\) encloses the use of computed low-rank intermediates,
\(\eta_{\rm state}\) the storage error in the updated arrays, and
\(\eta_{\rm fl}\) the evaluation error in the defect tracker.

\subsubsection{Local arithmetic bounds}

Before an update, let \(P,\widetilde H\in\mathbb F_{64}^{d\times d}\),
\(x\in\mathbb F_{64}^{d}\), \(s\in\{-1,+1\}\), and
\(E:=I_d-P\widetilde H\), with stored values embedded exactly in the reals.
Define
\[
\begin{aligned}
q&=Px,& z&=P^\top x,&
\beta&=x^\top q=z^\top x,& \alpha&=1+s\beta,\\
\widehat q&=\fl(Px),& \widehat z&=\fl(P^\top x),&
\widehat\beta&=\fl(x^\top\widehat q),&
\widehat\alpha&=\fl(1+s\widehat\beta).
\end{aligned}
\]
For comparison with the computed intermediates, set
\[
 \beta^\circ:=x^\top\widehat q,
 \qquad
 \alpha^\circ:=1+s\beta^\circ.
\]
The standard matrix--vector model gives
\[
\begin{aligned}
\varepsilon_q&:=\operatorname{up}(\gamma_d\abs P\abs x),&
\abs{q-\widehat q}&\leq \varepsilon_q,\\
\varepsilon_z&:=\operatorname{up}(\gamma_d\abs{P^\top}\abs x),&
\abs{z-\widehat z}&\leq \varepsilon_z.
\end{aligned}
\]

\paragraph{Certified denominator guard.}
\label[appendix]{app:denominator-safeguard}

The implementation forms
\[
\begin{aligned}
\varepsilon_{\alpha,\mathrm{fl}}
&:=\operatorname{up}\!\left(
\gamma_d\operatorname{up}(\abs{x}^{\top}\abs{\widehat q})
+\abs{\operatorname{TwoSum}(1,s\widehat\beta;\widehat\alpha)}\right),\\
\varepsilon_\alpha
&:=\operatorname{up}\!\left(
\varepsilon_{\alpha,\mathrm{fl}}+\abs{x}^{\top}\varepsilon_q\right),\\
\varepsilon_{qz}
&:=\operatorname{up}\!\left(
\abs{x}^{\top}(\varepsilon_q+\varepsilon_z)\right).
\end{aligned}
\]
The standard dot-product model and exact \textsc{TwoSum} residual give
\(\abs{\alpha-\widehat\alpha}\leq\varepsilon_\alpha\). Hence
\(\widehat\alpha>\varepsilon_\alpha\) certifies \(\alpha>0\) before any
division. An additional \(\sqrt u(1+\abs{\widehat\beta})\) policy floor sends
numerically fragile positive denominators to the rebuild path; it is not used
in the soundness argument.

The guard ensures \(\widehat\alpha\ne0\). Define the exact-real matrices formed
from the computed low-rank intermediates,
\[
\begin{aligned}
P^{+\circ}&:=P-\frac{s\widehat q\widehat z^\top}{\widehat\alpha}
, &
H^{+\circ}&:=\widetilde H+sxx^\top, &
E^{+\circ}&:=I_d-P^{+\circ}H^{+\circ}.
\end{aligned}
\]
The implementation stores \(P^+\) in binary64 and \(\widetilde H^+\) in
\(\mathbb F_{\rm s}\), promoting the latter exactly before tracker evaluation.
With both arrays embedded in the reals, let
\(E_{\mathrm{store}}^+:=I_d-P^+\widetilde H^+\) and define
\[
\mathcal T_s^\circ(Z)
:=Z-\widehat q\frac{s}{\widehat\alpha}
  \left(x^\top-\widehat z^\top\widetilde H\right),
\qquad Z\in\R^{d\times d}.
\]
For this map, \cref{thm:radius} requires
\[
\begin{aligned}
\norm{E-\widehat E}_F&\leq r,\\
\norm{E^{+\circ}-\mathcal T_s^\circ(E)}_F
&\leq\eta_{\mathrm{alg}},\\
\norm{E_{\mathrm{store}}^+-E^{+\circ}}_F
&\leq\eta_{\mathrm{state}},\\
\norm{\mathcal T_s^\circ(\widehat E)-\widehat E^+}_F
&\leq\eta_{\mathrm{fl}}.
\end{aligned}
\]
The constructions below bound the final three terms.

\begin{proofbox}[Computed-intermediate mismatch bound]
The exact vectors satisfy
\[
x^\top z=x^\top P^\top x=(x^\top Px)^\top=x^\top q,
\]
because both sides are real scalars.  Hence
\[
\begin{aligned}
\abs{x^\top(\widehat z-\widehat q)}
&=\abs{x^\top(\widehat z-z)+x^\top(q-\widehat q)}\\
&\leq\abs{x}^{\top}(\varepsilon_z+\varepsilon_q)
\leq \varepsilon_{qz}.
\end{aligned}
\]
Expanding \(E^{+\circ}:=I_d-P^{+\circ}H^{+\circ}\) gives the exact identity
\begin{equation}
\begin{aligned}
&E^{+\circ}
-\mathcal T_s^\circ(E)\\
&=-s(q-\widehat q)x^\top
+\frac{\widehat q}{\widehat\alpha}
\left[
s(\alpha^\circ-\widehat\alpha)
+x^\top(\widehat z-\widehat q)
\right]x^\top .
\end{aligned}
\label{eq:computed_intermediate_mismatch}
\end{equation}
To verify the coefficient of \(\widehat qx^\top\), observe that
\[
-s+\frac{s+x^\top\widehat z}{\widehat\alpha}
=\frac{
s(\alpha^\circ-\widehat\alpha)
+x^\top(\widehat z-\widehat q)}
{\widehat\alpha},
\]
where \(\alpha^\circ=1+s x^\top\widehat q\).
The componentwise budgets and
\(\|ab^\top\|_F=\|a\|_2\|b\|_2\) now imply
\[
\begin{aligned}
\eta_{\mathrm{alg}}
:=\operatorname{up}\!\Bigl(
&\norm{\varepsilon_q}_2\norm{x}_2\\
&+
\frac{\varepsilon_{\alpha,\mathrm{fl}}+\varepsilon_{qz}}
{\abs{\widehat\alpha}}
\norm{\widehat q}_2\norm{x}_2\Bigr)
\in\R_{\geq0}.
\end{aligned}
\]
This quantity upper-bounds the Frobenius norm of
\cref{eq:computed_intermediate_mismatch}.
\end{proofbox}

\paragraph{Stored-state update error.}

Let \(\Delta P:=-s\widehat q\widehat z^\top/\widehat\alpha\),
\(\Delta H:=sxx^\top\), \(\widehat{\Delta P}:=\fl_{64}(\Delta P)\), and, for
binary64 statistics, \(\widehat{\Delta H}:=s\fl_{64}(xx^\top)\). The standard
models in \cref{app:fp-conventions}, including the exact componentwise
\textsc{TwoSum} residuals
\(S_P=(P+\widehat{\Delta P})-P^+\) and
\(S_H=(\widetilde H+\widehat{\Delta H})-\widetilde H^+\), give
\[
\begin{aligned}
\mathcal U_{\Delta P}
&:=\operatorname{up}\!\left(
  \bar\gamma_2\abs{\widehat{\Delta P}}+\abs{S_P}\right),&
\abs{P^+-(P+\Delta P)}&\leq\mathcal U_{\Delta P},\\
\mathcal U_{\Delta H}
&:=\operatorname{up}\!\left(
  \bar u\abs{\widehat{\Delta H}}+\abs{S_H}\right),&
\abs{\widetilde H^+-(\widetilde H+\Delta H)}
&\leq\mathcal U_{\Delta H},
\end{aligned}
\]
where \(\bar\gamma_2:=\gamma_2/(1-\gamma_2)\).
For binary32 statistics, the same quantity is obtained by promoting the
post-update array exactly and enclosing its difference from the nominal update.
A product-difference expansion then gives
\[
\begin{aligned}
\eta_{\mathrm{state}}
:=\operatorname{up}\!\big(&
\norm{\mathcal U_{\Delta P}}_F
  (\norm{\widetilde H}_F+\norm{\Delta H}_F)\\
&+(\norm{P}_F+\norm{\Delta P}_F)
  \norm{\mathcal U_{\Delta H}}_F\\
&+\norm{\mathcal U_{\Delta P}}_F
  \norm{\mathcal U_{\Delta H}}_F\big)
\geq\norm{E_{\mathrm{store}}^+-E^{+\circ}}_F.
\end{aligned}
\]

\paragraph{Tracker-evaluation error.}

The tracker evaluates
\[
\begin{aligned}
\widehat w&:=\fl(\widetilde H^\top\widehat z),&
\widehat v&:=\fl(x-\widehat w),\\
\widehat a&:=\fl((s/\widehat\alpha)\widehat v),&
\widehat C&:=\fl(\widehat q\widehat a^\top).
\end{aligned}
\]
The standard componentwise models give
\[
\begin{aligned}
\mathcal U_w&:=\operatorname{up}\!\left(
 \gamma_d\abs{\widetilde H}^{\top}\abs{\widehat z}\right),&
\mathcal U_v&:=\operatorname{up}\!\left(
 \mathcal U_w+\bar u\abs{\widehat v}\right),\\
\mathcal U_a&:=\operatorname{up}\!\left(
 \mathcal U_v/\abs{\widehat\alpha}
 +\bar\gamma_2\abs{\widehat a}\right),&
\mathcal U_C&:=\operatorname{up}\!\left(
 \abs{\widehat q}\mathcal U_a^\top
 +\bar u\abs{\widehat C}\right).
\end{aligned}
\]
Finally, let
\(S_E:=(\widehat E-\widehat C)-\widehat E^+\) be the exact
\textsc{TwoSum} residual
for \(\widehat E^+:=\fl(\widehat E-\widehat C)\), and set
\(\mathcal U_S:=\abs{S_E}\). Therefore
\[
\eta_{\mathrm{fl}}
:=\operatorname{up}\!\left(
\norm{\mathcal U_C+\mathcal U_S}_F\right)
\geq
\norm{\mathcal T_s^\circ(\widehat E)-\widehat E^+}_F.
\]

\newpage
\subsubsection{Radius propagation theorem}
\label[appendix]{app:radius-proof}

\begin{samepage}
\begin{theorem}[Sequential proof-state invariant]
\label{thm:radius}
Suppose \(\norm{E-\widehat E}_F\leq r\).  Let
\(\eta_{\rm alg}\), \(\eta_{\rm state}\), and
\(\eta_{\rm fl}\) satisfy
\[
\begin{aligned}
\norm{E^{+\circ}-\mathcal T_s^\circ(E)}_F
&\leq\eta_{\rm alg},\\
\norm{E_{\rm store}^+-E^{+\circ}}_F
&\leq\eta_{\rm state},\\
\norm{\mathcal T_s^\circ(\widehat E)-\widehat E^+}_F
&\leq\eta_{\rm fl}.
\end{aligned}
\]
Then
\[
 r^+=\operatorname{up}\!\left(
 r+\eta_{\rm alg}+\eta_{\rm state}+\eta_{\rm fl}
 \right)
\]
satisfies
\[
\norm{E_{\rm store}^+-\widehat E^+}_F\leq r^+ .
\]
All three local bounds can be evaluated in \(O(d^2)\) work per rank-one
update.
\end{theorem}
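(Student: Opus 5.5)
The proof rests on a telescoping decomposition of the post-update tracking error into four differences, each controlled by one hypothesis. I would write
\[
E_{\rm store}^+-\widehat E^+
=\bigl(E_{\rm store}^+-E^{+\circ}\bigr)
+\bigl(E^{+\circ}-\mathcal T_s^\circ(E)\bigr)
+\bigl(\mathcal T_s^\circ(E)-\mathcal T_s^\circ(\widehat E)\bigr)
+\bigl(\mathcal T_s^\circ(\widehat E)-\widehat E^+\bigr).
\]
By hypothesis, the first, second and fourth terms are bounded in Frobenius norm by \(\eta_{\rm state}\), \(\eta_{\rm alg}\) and \(\eta_{\rm fl}\).

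The only structural observation needed concerns the third term. The map \(\mathcal T_s^\circ\) is a translation: \(\mathcal T_s^\circ(Z)=Z-K\) with the fixed matrix \(K=\widehat q(s/\widehat\alpha)(x^\top-\widehat z^\top\widetilde H)\), which does not depend on \(Z\). This \(K\) is well defined because the certified denominator guard ensures \(\widehat\alpha\neq0\). Hence
\[
\mathcal T_s^\circ(E)-\mathcal T_s^\circ(\widehat E)=E-\widehat E,
\]
whose norm is at most \(r\) by the incoming invariant. Note that the translation is isometric, so the radius is not inflated by any contraction or expansion factor. This is exactly why the recurrence in \cref{lem:defect-recurrence} was arranged in additive form.

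The triangle inequality then gives
\[
\norm{E_{\rm store}^+-\widehat E^+}_F\leq r+\eta_{\rm alg}+\eta_{\rm state}+\eta_{\rm fl}.
\]
Finally, \(\operatorname{up}(\cdot)\) returns a binary64 number that is at least the exact real sum of its nonnegative arguments; this follows from the \(\gamma_m\) definition with \(m=4\) terms and the nextafter step. So the computed \(r^+\) dominates the exact sum, provided the range guards have passed and no overflow occurred.

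For the cost claim, each of \(\eta_{\rm alg}\), \(\eta_{\rm state}\) and \(\eta_{\rm fl}\) is assembled from matrix--vector products, outer products formed entrywise, and Frobenius norms of \(d\times d\) arrays. Each of these costs \(O(d^2)\), and no matrix--matrix product is formed; in particular \(\norm{ab^\top}_F=\norm a_2\norm b_2\) avoids forming outer products when computing norms.

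I expect no real obstacle. The one point requiring care is confirming that \(\mathcal T_s^\circ\) uses \(\widetilde H\) and the computed intermediates only in the fixed translation term, so that it is genuinely a translation, and that the outward rounding of a four-term sum is sound.
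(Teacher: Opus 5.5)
Your proposal is correct and follows the paper's proof. You use the same telescoping through $E^{+\circ}$ and $\mathcal T_s^\circ$, and the observation that $\mathcal T_s^\circ$ has identity linear part, so the middle term equals $E-\widehat E$; then the triangle inequality and outward rounding give $r^+$. Your extra justifications of the $O(d^2)$ cost and of the soundness of $\operatorname{up}$ for a four-term nonnegative sum go slightly beyond what the paper writes out, but they agree with its stated conventions.
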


\begin{proofbox}[Proof of propagated-radius soundness]
The map \(\mathcal T_s^\circ\) has identity linear part, so
\(\mathcal T_s^\circ(E)-\mathcal T_s^\circ(\widehat E)=E-\widehat E\).
Inserting the nominal exact update and applying the triangle inequality gives
\[
\begin{aligned}
\norm{E_{\mathrm{store}}^+-\widehat E^+}_F
&\leq
\norm{E_{\mathrm{store}}^+-E^{+\circ}}_F\\
&\quad+\norm{E^{+\circ}-\mathcal T_s^\circ(E)}_F\\
&\quad+\norm{E-\widehat E}_F
+\norm{\mathcal T_s^\circ(\widehat E)-\widehat E^+}_F\\
&\leq r+\eta_{\mathrm{state}}+\eta_{\mathrm{alg}}+\eta_{\mathrm{fl}}
\leq r^+.
\end{aligned}
\]
Strict initialization gives the base case; induction proves the invariant for
every guarded tracker update.
\end{proofbox}

\end{samepage}

\section{Published-Method Applications}
\label{app:published-method-applications}

Candidates are fixed before post-hoc certification. Independent reference
intervals supply offline conformance labels and are excluded from online
timing. Service protocols and reference construction appear in
\cref{app:service-evaluation,app:reference-reproducibility}.

\subsection{Application Details}
\label{app:published-output-applications}

We use pinned official implementations and certify emitted candidates post hoc
without changing their update rules. Exact source revisions, instrumentation
patches, and reproduction commands are recorded in the artifact.

\paragraph{Newton-update removal.}
We use the official MNIST 3-vs-8 configuration of \citet{guo2020}, leaving its
objective perturbation, update, cumulative budget, and threshold unchanged.
Native closure encloses the source terms through the first retraining action,
and direct recertification checks the same candidate at that stop. Strict
closure can move retraining earlier, while the tighter direct certificate can
expose a conservative source action.

\paragraph{CEU supported-fidelity characterization.}
Post-update \RRV{} evaluates the emitted CEU head under an independently
declared retraining-reference contract~\citep{wu2023ceu}. We fixed
\(\tau_{\rm par}=10^{-3}\) before inspecting the outputs. The reference is the
unique minimizer of retained-node mean softmax cross-entropy plus
\(\lambda\norm{\theta}_2^2/2\), giving the curvature floor
\(\mu^-=\lambda\) in \cref{prop:strongly-convex-post-update}. Directed
arithmetic encloses the retained-objective gradient; verified optimization is
used only for offline tightness evaluation. The certificates cover their
reference intervals and quantify the weaker fidelity supported by the
snapshots alongside CEU's native randomized guarantee.

\paragraph{ScaleGUN gradient application.}
Native closure evaluates the same aggregate gradient quantity consumed by
ScaleGUN's propagation and optimization budget~\citep{yi2025scalegun}. The
candidate is unchanged during certification.

\subsection{Newton Direct-Perturbation Calibration}
\label{app:theorem-input-audits}

We follow the Newton direct-perturbation mechanism of
\citet{zou2025highdim} in its published synthetic logistic-ridge configuration,
evaluating one- and two-step Newton centers over 100 leave-one-out states.
Native closure bounds displacement from the exact-real Newton center, while
\OnDemandRRV{} certifies the executed center against a verified retained-set
reference.

To separate source slack from direction-free composition, let
\(p\in\{32,64\}\) denote the execution precision and define
\[
 \begin{aligned}
A&=\frac{\norm{\bar\theta-\theta^{\mathrm R}}_2}{r_{\rm nom}},
&B(p)&=\frac{\norm{\theta_p-\bar\theta}_2}{r_{\rm nom}},\\
C(p)&=\frac{\norm{\theta_p-\theta^{\mathrm R}}_2}{r_{\rm nom}}.
 \end{aligned}
\]
Directed enclosures produce \(A^+,B^+(p),C^+(p)\) for every paired state
(\cref{tab:published-method-application-details}). We form each reported ratio
within a state before aggregating. For FP64, \(C^+(64)\) is near zero in some
states, so we report the absolute composition--direct gap rather than an
unstable ratio. Direction-free composition remains looser under both execution
precisions on this population.

\begin{table}[H]
  \caption{\textbf{Detailed published-method application results.}
  Newton-perturbation bounds are normalized by the nominal radius; ScaleGUN
  ratios report the source budget consumed. Newton ratios are formed per state
  before aggregation; FP64 uses an absolute gap because its direct bound can be
  near zero.}
  \label{tab:published-method-application-details}
  \centering
  \footnotesize
  \setlength{\tabcolsep}{3pt}
  \renewcommand{\arraystretch}{1.05}

  \begin{minipage}[t]{0.485\textwidth}
    \vspace{0pt}
    \noindent
    \textbf{(a) Newton removal}

    \vspace{1pt}
    \hrule
    \vspace{2pt}

    \begin{tabular*}{\linewidth}{@{\extracolsep{\fill}}crrrr@{}}
      Seed & Native & \ExecCert{} & $\Delta$ & Direct upper \\
      0 & 251 & 249 & $-2$ & .00304 \\
      1 & 232 & 232 & $0$ & .00324 \\
      2 & 277 & 273 & $-4$ & .00421 \\
    \end{tabular*}
  \end{minipage}
  \hfill
  \begin{minipage}[t]{0.485\textwidth}
    \vspace{0pt}
    \noindent
    \textbf{(b) Newton direct perturbation}

    \vspace{1pt}
    \hrule
    \vspace{2pt}

    \begin{tabular*}{\linewidth}{@{\extracolsep{\fill}}lrrr@{}}
      Bound & Median & P95 & Max \\
      Source center, $A^+$ & .087 & .341 & .945 \\
      FP32 $B^+$ & .175 & .181 & .186 \\
      FP32 $C^+$ & .111 & .291 & .874 \\
      FP32 comp./direct & \textbf{2.28$\times$} & 2.49$\times$ & 2.56$\times$ \\
      FP64 $B^+$ & .067 & .070 & .074 \\
      FP64 $C^+$ & .020 & .271 & .871 \\
      FP64 comp.--direct & .134 & .141 & .148 \\
    \end{tabular*}
  \end{minipage}

  \vspace{8pt}

  \begin{minipage}[t]{0.485\textwidth}
    \vspace{0pt}
    \noindent
    \textbf{(c) ScaleGUN}

    \vspace{1pt}
    \hrule
    \vspace{2pt}

    \begin{tabular*}{\linewidth}{@{\extracolsep{\fill}}lrrrr@{}}
      Dataset & Cheap & Heads & Median & Max \\
      Cora & 12 & 84 & .59\% & .83\% \\
      ogbn-arxiv & 3 & 120 & 1.24\% & 1.28\% \\
      All & 15 & 204 & .68\% & 1.28\% \\
    \end{tabular*}

    \vspace{1pt}
    {\footnotesize 9 additional checkpoints use the source fallback.}
  \end{minipage}
  \hfill
  \begin{minipage}[t]{0.485\textwidth}
    \vspace{0pt}
    \noindent
    \textbf{(d) CEU}

    \vspace{1pt}
    \hrule
    \vspace{2pt}

    \begin{tabular*}{\linewidth}{@{\extracolsep{\fill}}lrr@{}}
      Fidelity & Median & Max \\
      Realized error & 8.86\% & 9.47\% \\
      \RRV{} upper & 14.06\% & 14.84\% \\
    \end{tabular*}

    \vspace{1pt}
    {\footnotesize Declared tolerance: $\tau_{\rm par}=0.10\%$.}
  \end{minipage}
\end{table}

\FloatBarrier

\section{Service Evaluation and Controls}
\label{app:service-evaluation}
\label{app:frozen-transactional}

\subsection{Experimental Systems and Common Protocols}
\label{app:llm-case-study}
\label{app:guarantee-matched-service}

\paragraph{3B safety service.}
The deployment service uses a frozen Llama-3.2-3B-Instruct
backbone~\citep{meta2024llama32} with a \(d=3073\) binary ridge head on
PKU-SafeRLHF-QA~\citep{ji2024pkusaferlhf}. We replay seven 1,000-request
deletion streams under a \(10^{-3}\) release tolerance and evaluate
fixed-readout contracts on 3,000 sealed representations.

\paragraph{Controlled diagnostics.}
The controlled experiments use DINOv2/CIFAR-10 and RoBERTa/AG-News multiclass
ridge heads. The tolerance study fixes 14,000 outputs, and the stress study
crosses three deletion streams, three conditioning levels, and two statistics
storage precisions, giving 18 trajectories. Independent retained-row
reconstruction supplies all offline reference labels.

\paragraph{Verification constructions.}
The guarantee-matched comparison holds the candidate, fixed readout, release
contract, and conformance labels constant while varying only how certificate
evidence is constructed. Three held-out streams at seven declared checkpoints
give 21 matched states. The six constructions are summarized below.

\begin{table}[H]
  \caption{Verification constructions in the guarantee-matched comparison.}
  \label{tab:verification-constructions}
  \centering
  \footnotesize
  \setlength{\tabcolsep}{3pt}
  \renewcommand{\arraystretch}{1.08}
  \begin{tabular}{@{}>{\raggedright\arraybackslash}p{0.34\textwidth}
      >{\centering\arraybackslash}p{0.12\textwidth}
      >{\centering\arraybackslash}p{0.12\textwidth}
      >{\centering\arraybackslash}p{0.12\textwidth}
      >{\centering\arraybackslash}p{0.12\textwidth}@{}}
    \toprule
    Construction & Build at release & Retained rows & Reuse inverse
      & Readout \\
    \midrule
    \OnDemandRRV{}: retained residual & \cmark & \cmark & \xmark & Norm-ball \\
    \OnDemandRRV{}: corrected residual & \cmark & \cmark
      & \cmark\textsuperscript{a} & Error-centered \\
    \OnDemandRRV{}: fresh verified factor & \cmark & \xmark & \xmark & Error-centered \\
    Maintained verified factor & \xmark & \xmark & \xmark & Error-centered \\
    \IncrementalRRV{}: norm-ball & \xmark & \xmark & \cmark & Norm-ball \\
    \IncrementalRRV{}: directional & \xmark & \xmark & \cmark & Error-centered \\
    \bottomrule
  \end{tabular}
  \vspace{2pt}

  \raggedright
  \scriptsize\textsuperscript{a} Partial inverse reuse.
\end{table}

\subsection{Reference-Transfer Soundness and Tightness}
\label{app:frozen-head-protocol}

\paragraph{Tolerance--coverage frontier.}
For each of the 14,000 fixed outputs, the independent retain-set audit computes
the realized minimum tolerance \(\tau_{\rm true}\), while \RRV{} computes the
certified threshold \(\tau_{\rm cert}\). We sweep the deployment tolerance from
\(10^{-5}\) to \(10^{-2}\); every output satisfies
\(\tau_{\rm cert}\geq\tau_{\rm true}\).

\paragraph{Stored-equation stress comparison.}
This experiment tests whether certifying the stored equation is equivalent to
certifying the exact retain-set reference as statistics drift. The 18
trajectories defined above evaluate the stored equation and its certificate in
binary64 while varying statistics storage precision. Independent retained-row
reconstruction labels conformance at \(\tau_{\rm par}=10^{-3}\). Coverage and
false-release counts appear in \cref{tab:hard-stream-conformance}.

\begin{table}[H]
  \caption{\textbf{Matched-output conformance on nine workload configurations
  per precision.} Coverage is conditional on a conforming output. The
  incremental row reports the first check, before any proof-only rebuild. Four
  FP32 reference intervals remain unresolved and are not counted as conforming
  or nonconforming.}
  \label{tab:hard-stream-conformance}
  \centering
  \footnotesize
  \setlength{\tabcolsep}{4pt}
  \renewcommand{\arraystretch}{1.08}
  \begin{tabular}{@{}lrrrr@{}}
    \toprule
    Path & Total & Conforming & \shortstack{Certified\\conforming} & \shortstack{False\\release} \\
    \midrule
    FP32 stored equation & 9,000 & 7,293 & 7,293 & 1,703 \\
    FP32 + \IncrementalRRV{} & 9,000 & 7,293 & 4,838 & 0 \\
    FP64 + \IncrementalRRV{} & 9,000 & 9,000 & 9,000 & 0 \\
    \bottomrule
  \end{tabular}
\end{table}

\subsection{Evidence Reuse and Lifecycle Cost}

\paragraph{Guarantee-matched comparison.}
The \OnDemandRRV{} constructions form a retained-data residual, a corrected
residual, or a fresh verified factor when certification is requested. The
maintained-factor baseline instead downdates a factor after every deletion and
validates its residual at release. The two \IncrementalRRV{} constructions
share the inverse-defect proof state and differ only in their readout
certificate. The corrected-residual construction is the
only nonstandard baseline: it computes
\[
 \theta_{\rm corr}=\widetilde\theta
   +P(\widetilde B-\widetilde H\widetilde\theta)
\]
and evaluates a retained-data residual at \(\theta_{\rm corr}\). If this
residual gives
\(\norm{\theta_{\rm corr}-\theta^{\mathrm R}}_F\leq C_{\rm corr}\), it still
certifies the original candidate through
\[
 \norm{Z(\widetilde\theta-\theta^{\mathrm R})}_F
 \leq
 \norm{Z(\widetilde\theta-\theta_{\rm corr})}_F
 + \norm{Z}_2 C_{\rm corr}.
\]
All terms are evaluated outward, so every construction certifies the executed
snapshot rather than substituting \(\theta_{\rm corr}\) for release.

\begin{table}[H]
  \caption{Complete guarantee-matched results on 21 fixed states. Latency
  summarizes 10 repetitions per state; stored state excludes retained rows.}
  \label{tab:guarantee-matched-service-full}
  \centering\footnotesize
  \setlength{\tabcolsep}{4.5pt}
  \renewcommand{\arraystretch}{1.10}
  \begin{tabular}{@{}lcrrrr@{}}
    \toprule
    Verification method & Retain scan & \shortstack{Certified\\release states} & Med. (s) & P95 (s)
      & State (MiB) \\
    \midrule
    \OnDemandRRV{} (retained residual) & Yes & 0/21 & 5.163 & 5.410 & 72.1 \\
    \OnDemandRRV{} (corrected residual) & Yes & 21/21 & 5.341 & 5.696 & 286.5 \\
    \OnDemandRRV{} (fresh verified factor) & No & 21/21 & 2.611 & 3.593 & 286.5 \\
    Maintained verified factor & No & 21/21 & 2.453 & 3.678 & 358.6 \\
    \IncrementalRRV{} (norm-ball) & No & 0/21 & 0.801 & 0.859 & 360.3 \\
    \IncrementalRRV{} (directional) & No & 21/21 & 1.056 & 1.163 & 430.6 \\
    \bottomrule
  \end{tabular}
\end{table}

\paragraph{Decision agreement.}
Across 56 checkpoints from seven streams, margin certificates cover
\(95.0\)--\(95.8\%\) of declared decisions. The independent offline audit gives
\(99.96\%\) mean observed agreement rather than certificate coverage.

\paragraph{Release-frequency accounting.}
We measure output generation, certified-statistics maintenance, and proof-state
maintenance across all 1,000 requests in each of three streams. Release-check
cost is the per-stream median over the seven declared checkpoints, with 10
paired repetitions per checkpoint. We combine these measured components at 7,
25, 50, 100, 250, 500, and 1,000 checks per 1,000 requests. The three routes
share candidate-generation work; fresh factor pays no per-request proof
maintenance, while maintained factor and directional \IncrementalRRV{} do.
No route produces a false release on the matched states, and maintained factor
requires no rebuild during the stream replays. Linear component accounting
places the directional/fresh crossover at 21.5 checks and the maintained/fresh
crossover at 214.3 checks per 1,000 requests.

\subsection{Precision and Representation Boundaries}
\label{app:sequential-proof-state-behavior}

\paragraph{Producer-precision stress.}
Within each producer configuration, \IncrementalRRV{} and verified-factor
\OnDemandRRV{} inspect the same output; changing statistics precision defines
a separate candidate population. Lifecycle totals include all online work of
the selected path, while independent reference audits are excluded. The 4,162
lower-precision proof refreshes are triggered by a bound above tolerance, not
by invalid inverse state. The median bound moves from $1.014\tau$ before a
committed refresh to $0.830\tau$ afterward; every commit recovers
certification. All attempts together consume 4.39\% of the FP32 lifecycle and
no commit changes a candidate. \Cref{tab:hard-stream-conformance} reports the
first check, whereas \cref{tab:precision-boundary} reports the final decision
on the same population. Unresolved reference intervals are excluded from the
conforming denominator.

\begin{table}[H]
  \centering
  \caption{\textbf{Producer precision and certificate construction.}
  Rows change the emitted candidate; within each row, columns compare the same
  candidates. Coverage is conditional on conformance and uses the final
  post-rebuild decision; proof-only rebuilds do not change candidates.}
  \label{tab:precision-boundary}
  \small
  \setlength{\tabcolsep}{5pt}
  \begin{tabular}{@{}lrrr@{\qquad}rrr@{}}
    \toprule
    & \multicolumn{3}{c}{\OnDemandRRV{}}
    & \multicolumn{3}{c}{\IncrementalRRV{}} \\
    \cmidrule(lr){2-4}\cmidrule(l){5-7}
    Producer
      & Certified (\%) & Life (s) & State (MiB)
      & Certified (\%) & Life (s) & State (MiB) \\
    \midrule
    FP32 & \textbf{99.97} & \textbf{444.7} & \textbf{11.4}
         & 67.24 & 502.1 & 20.4 \\
    FP64 & \textbf{100.00} & \textbf{378.4} & \textbf{9.1}
         & \textbf{100.00} & 413.5 & 18.2 \\
    \bottomrule
  \end{tabular}
\end{table}

\paragraph{Contract--format compatibility frontier.}
\label{app:format-reachability}
This protocol asks whether the final representation can express an artifact
that satisfies the selected contract. We encode each format from the same
master head, exhaustively evaluate the BF16 and FP16 codebooks under
round-to-nearest-even, and rigorously enclose the minimum feasible tolerance
\(\tau_{\min}\) shown in
\cref{fig:deployment-tradeoffs}\textup{(b)}.

\FloatBarrier

\section{Independent Reference Construction}
\label{app:reference-reproducibility}

\subsection{Offline Verified Reference and Evaluation Labels}
\label{app:independent-reference}

Offline labels are reconstructed from retained rows, independently of the
service arrays and update path. A verified inverse-defect certificate encloses
the retraining solution, and a high-precision dyadic reconstruction confirms
all classifications. For its solution \(\theta_t^\star\), directed evaluation
of \(r_t=B_t-H_t\theta_t^\star\) gives
\[
 r_{\rm ref,t}=\frac{\norm{r_t}_F}{\lambda},
 \qquad
 \norm{\theta_t^\star-\theta_t^{\mathrm R}}_F\leq r_{\rm ref,t}.
\]

Directed norm bounds and the triangle inequalities of
\cref{eq:relative-parameter-contract} then give a rigorous relative-error
interval; precision is increased until each classification is resolved.

\subsection{Newton Direct-Perturbation Reference Construction}

To close the Newton direct-perturbation mechanism against its exact-real rule,
let \(s_j=\theta_j^{64}-\theta_{j-1}^{64}\) denote the exact-real difference
of two stored binary64 iterates. The verifier represents this dyadic difference
as the unevaluated sum returned by \textsc{TwoSum}; it does not identify
\(s_j\) with an ordinary rounded subtraction. This definition also handles a
solver step swallowed by the state addition: the realized step is then zero.
Let \(g(\theta)=\nabla L_{\rm ret}(\theta)\) and
\(H(\theta)=\nabla^2 L_{\rm ret}(\theta)\). On a convex region containing the
exact-real and replay iterates, and hence the segments connecting them, assume
\(mI\preceq H(\theta)\preceq MI\) and
\(\norm{H(\theta)-H(\theta')}_2\leq
L\norm{\theta-\theta'}_2\). We define the Newton-equation residual
\(r_j=H(\theta_{j-1}^{64})s_j+g(\theta_{j-1}^{64})\). Directed MPFR
evaluation applies the Hessian to both components of the exact difference and
bounds \(\norm{r_j}_2\). We propagate
\[
 \begin{aligned}
 \eta_j&=\eta_{j-1}
 +\frac{M\eta_{j-1}+L\eta_{j-1}\norm{s_j}_2+\norm{r_j}_2}{m},\\
 \eta_0&=0.
 \end{aligned}
\]
Thus \(\eta_j\) bounds the replay center's distance from the exact-real Newton
iterate, and \(\norm{\theta_j^{32}-\theta_j^{64}}_2+\eta_j\) closes the
execution error used by the compositional Laplace interface. This route needs
no retraining reference; the direct route instead encloses the candidate
against a verified retained-set reference.

\FloatBarrier

\end{document}